\def\eqref#1{equation~(\ref{#1})}
\def\1{\bf{1}}
\newcommand{\Norm}[1]{\left\| #1 \right\|}
\def\eps{{\epsilon}}
\def\vone{{\bf{1}}}
\def\va{{\bf{a}}}
\def\vf{{\bf{f}}}
\def\vg{{\bf{g}}}
\def\vs{{\bf{s}}}
\def\vu{{\bf{u}}}
\def\vv{{\bf{v}}}
\def\vx{{\bf{x}}}
\def\vy{{\bf{y}}}
\def\vz{{\bf{z}}}
\def\fO{{\mathcal{O}}}
\def\fS{{\mathcal{S}}}
\def\fY{{\mathcal{Y}}}
\def\BE{{\mathbb{E}}}
\def\BI{{\mathbb{I}}}
\def\BR{{\mathbb{R}}}
\DeclareMathOperator*{\argmax}{arg\,max}
\DeclareMathOperator*{\argmin}{arg\,min}
\newtheorem{thm}{Theorem}[section]
\newtheorem{dfn}{Definition}[section]
\newtheorem{lem}{Lemma}[section]
\newtheorem{asm}{Assumption}[section]
\newtheorem{rmk}{Remark}[section]
\newtheorem{prop}{Proposition}[section]
\def\Ddots{\mathinner{\mkern1mu\raise\p@
\vbox{\kern7\p@\hbox{.}}\mkern2mu
\raise4\p@\hbox{.}\mkern2mu\raise7\p@\hbox{.}\mkern1mu}}
\newcommand*{\rom}[1]{\expandafter\@slowromancap\romannumeral #1@}
\def\FM {{{\texttt{FastMix}}}}
\def\bg {{{\bar g}}}
\def\bs {{{\bar s}}}
\def\bv {{{\bar v}}}
\def\bx {{{\bar x}}}
\def\by {{{\bar y}}}
\def\bu {{{\bar u}}}
\def\bz {{{\bar z}}}
\definecolor{mydarkgreen}{RGB}{39,130,67}
\definecolor{mydarkred}{RGB}{192,25,25}
\newcommand{\green}{\color{mydarkgreen}}
\newcommand{\red}{\color{mydarkred}}
\newcommand{\cmark}{\green\ding{51}}%
\newcommand{\xmark}{\red\ding{55}}%
\definecolor{bgcolor}{rgb}{0.93,0.99,1}
\begin{document}

%

%

\twocolumn[

\aistatstitle{An Efficient Stochastic Algorithm for Decentralized Nonconvex-Strongly-Concave Minimax Optimization}

\aistatsauthor{ Lesi Chen$^1$ \And Haishan Ye$^2$ \And  Luo Luo$^{3,*}$}
\vskip0.25cm

\aistatsaddress{\small $^1$Institute for Interdisciplinary Information Sciences, Tsinghua Univerisity \\  \small$^2$School of Management, Xi’an Jiaotong University; SGIT AI Lab, State Grid Corporation of China \\ \small $^{3}$School of Data Science, Fudan University; Shanghai Key Laboratory for Contemporary Applied Mathematics} 
]

\begin{abstract}
This paper studies the stochastic nonconvex-strongly-concave minimax optimization over a multi-agent network.
We propose an efficient algorithm, called Decentralized Recursive gradient descEnt Ascent Method (\texttt{DREAM}), which achieves the best-known theoretical guarantee for finding the $\epsilon$-stationary points. 
Concretely, it requires $\mathcal{O}(\min (\kappa^3\epsilon^{-3},\kappa^2 \sqrt{N} \epsilon^{-2} ))$ stochastic first-order oracle (SFO) calls and $\tilde \fO(\kappa^2 \epsilon^{-2})$ communication rounds, where $\kappa$ is the condition number and $N$ is the total number of individual functions.
Our numerical experiments also validate the superiority of \texttt{DREAM} over previous methods.
\end{abstract}

\section{Introduction}
This paper studies the decentralized minimax optimization problem, where $m$ agents in a network collaborate to solve the problem
\begin{align} \label{main:prob}
\min_{x\in\BR^{d_x}}\max_{ y \in \fY} f(x,y) \triangleq \frac{1}{m}\sum_{i=1}^m f_i(x,y).
\end{align}
We suppose that $f(x,y)$ is $\mu$-strongly-concave in $y$; $\fY \subseteq \BR^{d_y}$ is closed and convex; each local function on the $i$-th agent  has the following stochastic form  
\begin{align}\label{func:f_i-stochastic}
    f_i(x,y) \triangleq \BE [F_i(x,y;\xi_i)];
\end{align}
and the stochastic component $F_i(x,y;\xi_i)$ indexed by the random variable $\xi_i$ is $L$-average smooth.
The nonconvex-strongly-concave minimax problem~(\ref{main:prob}) plays an important role in many machine learning applications, such as adversarial training~\citep{nagarajan2017gradient,farnia2021train}, distributional robust optimization~\citep{sinha2017certifying,levy2020large,jin2021non},  AUC maximization~\citep{guo2020fast,liu2019stochastic,yuan2021federated}, reinforcement learning~\citep{qiu2020single,zhang2019policy,wai2018multi,jin2020efficiently}, learning with non-decomposable loss~\citep{fan2017learning,rafique2021weakly} and so on. Following existing non-asymptotic analysis for nonconvex-strongly-concave minimax optimization problems~\citep{lin2020gradient,luo2020stochastic,lin2020near}, we
focus on the task of finding an $\epsilon$-stationary point of the primal function $P(x)\triangleq\max_{y\in\fY} f(x, y)$.

In this paper, we also consider a popular special case of problem (\ref{main:prob}) when each random
variable $\xi_i$ is finitely sampled from $\{\xi_{i,1},\dots, \xi_{i,n}\}$. That is, we can write the local function as
\begin{align}\label{func:f_i-finite}
    f_i(x,y) \triangleq \frac{1}{n}\sum_{j=1}^n F_{i j}(x,y).
\end{align}
We refer to the general form (\ref{func:f_i-stochastic}) as the online (stochastic) case and refer to the special case (\ref{func:f_i-finite}) as the offline (finite-sum) case. We define $N = mn$ as the total number of individual functions for the offline case.

Nonconvex-strongly-concave minimax optimization has received increasing attentions in recent years~\citep{chen2021escaping,nagarajan2017gradient,zhang2021complexity,lin2020gradient,lin2020near,zhang2020single,luo2020stochastic,luo2022finding,zhang2022sapd+,xu2020gradient}.
In the scenario of single machine, \citet{lin2020gradient} showed the Stochastic Gradient Descent Ascent (\texttt{SGDA}) requires~$\fO(\kappa^3\epsilon^{-4})$ SFO complexity 
to find an $\epsilon$-stationary point of $P(x)$, where the condition number is defined by~$\kappa\triangleq L/\mu$.
\citet{luo2020stochastic} proposed the Stochastic Recursive gradiEnt Descent Ascent (\texttt{SREDA}), which uses the variance reduction technique of Stochastic Path Integrated Differential Estimator (\texttt{SPIDER}) \citep{fang2018spider} to establish a SFO complexity of~$\fO(\min(\kappa^3\epsilon^{-3},\sqrt{n}\kappa^2 \epsilon^{-2}))$. 
It is worth noting that the~$\fO(\min(\epsilon^{-3},\sqrt{n}\epsilon^{-2}))$ dependency on $\epsilon$ and $n$ matches the lower bound of stochastic nonconvex optimization under the average-smooth assumption~\citep{arjevani2023lower,fang2018spider}.

Distributed optimization is a popular setting for training large-scale machine learning models. It allows all agents on a given network to collaboratively optimize the global objective. In the decentralized scenario, each agent on the network only communicates with its neighbors. The decentralized training fashion
avoids the communication traffic jam on the central node~\citep{lu2021optimal}.
There have been a lot of works focusing on the complexity of decentralized stochastic minimization problems~\citep{shi2015extra,li2019communication,uribe2020dual,li2022variance,yuan2016convergence,sun2020improving,xin2022fast,li2022destress,chen2021communication,wang2021distributed,kovalev2020optimal,hendrikx2021optimal}. 

{However, the understanding of the complexity of first-order methods decentralized nonconvex-strongly-concave minimax problems is still limited.}
For offline decentralized nonconvex-strongly-concave minimax problems,
\citet{tsaknakis2020decentralized} proposed the Gradient-Tracking Descent-Ascent (\texttt{GT-DA}), which combines multi-step gradient descent ascent gradient~\citep{nouiehed2019solving} with gradient tracking~\citep{nedic2017achieving,qu2019accelerated}. \texttt{GT-DA} can provably find an $\epsilon$-stationary point of $P(x)$ within~$\tilde \fO(N \epsilon^{-2} )$ SFO calls and $\tilde \fO(\epsilon^{-2})$ communication rounds. \citet{zhang2021taming} proposed the Gradient-Tracking Gradient Descent Ascent (\texttt{GT-GDA}) which updates both variable $x$ and $y$ simultaneously. The removal of the inner loop with respect to $y$ also leads to the removal of the additional $\fO(\log({1}/{\epsilon}))$ factor in the complexity, and \texttt{GT-DA} can provably find an $\epsilon$-stationary point of $P(x)$ within $\fO(N \epsilon^{-2} )$ SFO calls and~$\fO(\epsilon^{-2})$ communication rounds.  However, both \texttt{GT-DA} and \texttt{GT-GDA} access the exact local gradients on each agent, which may be
quite expensive when $n$ is very large.
To reduce the computation complexity, \citet{zhang2021taming} further proposed the Gradient Tracking-Stochastic Recursive Variance Reduction (\texttt{GT-SRVR}) by combining \texttt{GT-GDA} with \texttt{SPIDER}~\citep{fang2018spider}. \texttt{GT-SRVR}
requires $\fO(N +  \sqrt{m N} \epsilon^{-2})$ SFO complexity to find an $\epsilon$-stationary point of $P(x)$, outperforming \texttt{GT-DA} and \texttt{GT-GDA} when $ n \gtrsim m$. 
However, for fixed~$N$ (total
number of individual functions), the SFO upper bound of \texttt{GT-SRVR} will increase with the increase of $m$ (number of agents).
This trend seems somewhat unreasonable since involving more agents in the computation intuitively should not result in a higher overall computational cost.  



For the online case, \citet{xian2021faster} introduced the variance reduction technique of STOchastic Recursive
Momentum (\texttt{STORM}) \citep{cutkosky2019momentum} and proposed the Decentralized Minimax Hybrid Stochastic Gradient Descent (\texttt{DM-HSGD}) with an upper complexity bound of $\fO( \kappa^3 \epsilon^{-3}) $ for both SFO calls and communication rounds. For this online case, 
The SFO complexity of \texttt{DM-HSGD} recovers the result of \texttt{SREDA}~\citep{luo2020stochastic} in the scenario of single-machine (when $m=1$). 
Although the SFO complexity of \texttt{DM-HSGD} is better than those for \texttt{GT-GDA} and \texttt{GT-SRVR} when $N$ has a higher order of magnitude compared to $\epsilon^{-1}$,
the communication complexity of $\fO(\epsilon^{-3})$ in \texttt{DM-HSGD} is worse than the communication complexity of $\fO(\epsilon^{-2})$ in \texttt{GT-GDA} and \texttt{GT-SRVR}. 
This implies \texttt{DM-HSGD} may has no advantage when the bottleneck is the cost of communication.


In many applications, the inner variable $y$ in minimax problem (\ref{main:prob}) is often subject to some constraints, meaning that $\fY$ typically represents a given convex set endowed with a specific model. For instance, in adversarial training \citep{goodfellow2014explaining}, the variable $y$ represents perturbations for the input, which typically lie within the box $\fY = \{ y \in \BR^{d_y}: \Vert y \Vert_{\infty} \le c \}$ for some positive constant $c$. 
In distributionally
robust optimization \citep{yan2019stochastic}, the variable~$y$ represents the probability distribution in the simplex~$\fY = \{ y \in \BR^{d_y}: \sum_k y_k = 1, y_k \ge 0 \}$. 
Dealing with the constraint on variable $y$ typically requires additional steps like projection, which lead to extra consensus error in the decentralized setting.
However, existing variance-reduced methods for decentralized nonconvex-strongly-concave minimax problems including \texttt{GT-SRVR} and \texttt{DM-HSGD} only successfully deal with the unconstrained case.

In this paper, we propose a novel method called Decentralized Recursive-gradient dEscent Ascent Method (\texttt{DREAM}) for solving decentralized nonconvex-strongly-concave minimax problems. 
We provide a unified convergence analysis for both the online and offline setups. Our analysis indicates that the proposed \texttt{DREAM} achieves the best-known complexity guarantee of both cases. We summarize the advantage of \texttt{DREAM} as follows.
\begin{itemize}[topsep=-5.5pt,itemsep=1.5pt,partopsep=1.5pt, parsep=1.5pt,leftmargin=15pt]
     \item For the offline case,  \texttt{DREAM} achieves the SFO complexity of $\fO(N + \sqrt{N} \kappa^2 \epsilon^{-2})$  and the communication complexity of $\tilde \fO(\kappa^2 \epsilon^{-2})$. 
     The SFO complexity of \texttt{DREAM} 
     is strictly better than existing methods, 
     and achieves a linear speed-up with respect to the number of agents $m$, which is better than \texttt{GT-SRVR}.
     The communication complexity of \texttt{DREAM} remains state-of-the-art, matching those of \texttt{GT-DA}/\texttt{GT-GDA}/\texttt{GT-SRVR} in the dependency of $\epsilon$, up to logarithmic factors.  
    \item For the online case, \texttt{DREAM} achieves the $\fO(\kappa^3 \epsilon^{-3})$ SFO complexity and the $\tilde \fO(\kappa^2 \epsilon^{-2})$ communication complexity.
    The SFO complexity of \texttt{DREAM} is optimal in the dependency of $\epsilon$~\citep{arjevani2023lower}. And the communication complexity of \texttt{DREAM} 
    strictly improves that of \texttt{DM-HSGD} in the dependency of both $\kappa$ and $\epsilon$.  
    \item Moreover, \texttt{DREAM} is capable of working in both unconstrained and constrained scenarios, which gives it a wider range of applicability than previous variance-reduced methods \texttt{GT-SRVR}  and \texttt{DM-HSGD}.
\end{itemize}
We compare our theoretical results with previous work in Table \ref{tab:result}. 

\paragraph{Notations.} 
Throughout this paper, we denote $\Vert \cdot \Vert$ as the Frobenius norm of a matrix or the Euclidean norm of a vector. 
We use $I_m \in \BR^{m \times m}$ to present a $m$ by $m$ identity matrix, $O_m \in \BR^{m\times m}$ to present a $m$ by $m$ zero matrix, and let $\vone=[1,\cdots,1]^\top\in\BR^m$.
We define aggregated variables $\vx\in\BR^{m\times d_x}$, $\vy\in\BR^{m\times d_y}$ and $\vz\in\BR^{m\times (d_x+d_y)}$ for all agents as
\begin{align*}
\vx=\begin{bmatrix}
\vx(1) \\ \vdots \\\vx(m)
\end{bmatrix}, &
\quad
\vy=\begin{bmatrix}
\vy(1) \\ \vdots \\\vy(m)
\end{bmatrix} 
\quad \text{and} \quad 
\vz=\begin{bmatrix}
\vz(1) \\ \vdots \\\vz(m)
\end{bmatrix},
\end{align*}
where row vectors $\vx(i)\in\BR^{d_x}$ and $\vy(i)\in\BR^{d_y}$ are local variables on the $i$-th agent; and we also denote $\vz(i) = [\vx(i); \vy(i)]\in\BR^d$ with $d=d_x+d_y$. 
We use the lowercase with the bar to represent mean vector, e.g.,
\begin{align*}
\bx = \frac{1}{m}\sum_{i=1}^m \vx(i),~
\by = \frac{1}{m}\sum_{i=1}^m \vy(i)  ~ \text{and} ~
\bz = \frac{1}{m}\sum_{i=1}^m \vz(i).
\end{align*}
Similarly, we also introduce the aggregated gradient as 
\begin{align*}
\nabla \vf(\vz)=\begin{bmatrix}
\nabla f_1(\vx(1),\vy(1))^\top \\ \vdots \\ \nabla f_m(\vx(m),\vy(m))^\top
\end{bmatrix}
\in\BR^{m\times d}.    
\end{align*}
We use $\BI[\,\cdot\,]$ to represent the indicator function of an event and define $n=+\infty$ for the online case.

\begin{table*}[t]
\centering
\caption{We compare the theoretical results of \texttt{DREAM} with previous methods for decentralized nonconvex-strongly-concave minimax optimization for both the offline and online settings. Notations $\kappa^p$ and $\kappa^q$ are used when the polynomial dependency on $\kappa$ is not explicitly provided~\citep{zhang2021taming,tsaknakis2020decentralized}. 
The notation~$\tilde \fO(\,\cdot\,)$ hides logarithmic factors in complexity.
Note that \texttt{GT-GDA} and \texttt{GT-SRVR} only consider the unconstrained problem, which corresponds to the specific case in our setting where $\fY=\BR^{d_y}$. 
The design of \texttt{DM-HSGD} includes the general constrained setting, but its convergence analysis for the constrained case looks problematic and we provide more detailed discussions in Appendix B.
}
\vskip0.3cm
\label{tab:result}
\begin{tabular}{cccccc}
    \hline
     Setup & Algorithm   & \#SFO  & \#Communication & Constraint \\
    \hline\hline
    \addlinespace
      & 
      \makecell[c]{ \texttt{GT-DA} \\
      \citet{tsaknakis2020decentralized}}
      & $\tilde \fO\left( N \kappa^p  \epsilon^{-2} \right)$ & $\tilde \fO\left( \kappa^q \epsilon^{-2} \right)$ & \cmark  \\ 
       \addlinespace
      & 
      \makecell[c]{\texttt{GT-GDA}
      \\ \citet{zhang2021taming}
      } & $\fO\left(  N  \kappa^p  \epsilon^{-2}   \right)$ & 
     $\fO\left(  \kappa^q \epsilon^{-2}   \right)$  & \xmark   \\[-0.1cm]
     Offline  \\[-0.1cm]
      &
      \makecell[c]{\texttt{GT-SRVR} \\ \citet{zhang2021taming}
      }
       & $\fO\left( N +  \sqrt{m N}   \kappa^p  \epsilon^{-2}  \right)$ & 
     $\fO\left( \kappa^q \epsilon^{-2}\right)$ & \xmark   \\
     \addlinespace 
      &
      \makecell[c]{\texttt{DREAM} (Ours) \\
      Theorem \ref{thm:main}}
       & $\fO(N + \sqrt{N } \kappa^2 \epsilon^{-2})$ & $\tilde \fO\left( \kappa^2 \epsilon^{-2}\right)$  & \cmark \\
    \addlinespace
    \hline 
    \addlinespace
    & \makecell[c]{\texttt{DM-HSGD} \\ \citet{xian2021faster}}    & $\fO \left( \kappa^3 \epsilon^{-3}   \right)$ & 
    $\fO \left( \kappa^3 \epsilon^{-3}   \right)$  & \xmark
   \\[-0.1cm]
     Online & \\[-0.1cm]
      & \makecell[c]{\texttt{DREAM} (Ours) \\
      Theorem \ref{thm:main}} & $\fO(  \kappa^3  \epsilon^{-3})$ &
     $ \tilde \fO\left( \kappa^2  \epsilon^{-2}\right)$  & 
     \cmark \\ \addlinespace
    \hline
    \end{tabular}
\end{table*}

\section{Assumptions and Preliminaries}

Throughout this paper, we suppose the stochastic NC-SC decentralized optimization problem~(\ref{main:prob}) satisfies the following standard assumptions. 
\begin{asm} \label{asm:lower-bounded}
We suppose $P(x)\triangleq \max_{y\in\fY}f(x,y)$ is lower bounded. That is, we have 
\begin{align*}
P^*=\inf_{x\in\BR^{d_x}}P(x)>-\infty.   
\end{align*}
\end{asm}

\begin{algorithm}[t]
\caption{$\FM(\va^{(0)},K)$} \label{alg:fm}
\begin{algorithmic}[1]
	\STATE \textbf{Initialize:} $\va^{(-1)}=\va^{(0)}$ \\[0.12cm]
	\STATE $\eta_a=1/\big(1+\sqrt{1-\lambda_2^2(W)}\,\big)$ \\[0.12cm]
	\STATE \textbf{for} $k = 0, 1, \dots, K$ \textbf{do}\\[0.12cm]
	\STATE\quad $\va^{(k+1)}=(1+\eta_a)W\va^{(k)}-\eta_a\va^{(k-1)}$ \\[0.12cm] 
	\STATE\textbf{end for} \\[0.12cm]
	\STATE \textbf{Output:} $\va^{(K)}$ 
\end{algorithmic}
\end{algorithm}\setlength{\textfloatsep}{0.45cm}

\begin{asm}\label{asm:smooth}
We suppose the stochastic component functions $F(x,y;\xi_i)$ on each agent is $L$-average smooth for some $L>0$. 
That is, we have
\begin{align*}
&\quad \BE\Norm{\nabla F_i(x,y;\xi_i)-\nabla F_i(x',y';\xi_i)}^2 \\
&\leq L^2\big(\Norm{x-x'}^2+\Norm{y-y'}^2\big)
\end{align*}
for any $(x,y), (x',y')\in\BR^{d_x\times d_y}$ and random index $\xi_i$.
\end{asm}


\begin{asm} \label{asm:SC}
We suppose each local function $f_i(x,y)$ is $\mu$-strongly-concave in $y$. 
That is, there exists some constant~$\mu > 0$ such that we have
\begin{align*}
    f_i(x, y) \le f_i(x, y') + \nabla_y f_i(x, y')^\top (y-y') - \frac{\mu}{2} \Vert y-y' \Vert^2
\end{align*}
for any $x \in \BR^{d_x}$ and $y,y' \in \BR^{d_y}$.
\end{asm}

Based on the smoothness and strong concavity assumptions, we can define the condition number of our optimization problem as follows.

\begin{dfn}
We define $\kappa\triangleq L/\mu$ as the condition number of problem~(\ref{main:prob}), where $L$ and $\mu$ are defined in Assumption~\ref{asm:smooth} and \ref{asm:SC} respectively.
\end{dfn}

The differentiability of the primal function $P(x)$ can be proved by Danskin's theorem~\cite[Lemma 4.3]{lin2020gradient}.

\begin{prop}\label{lem:varphi-Lip}
Under Assumptions~\ref{asm:smooth} and~\ref{asm:SC}, the function~$P(x)$ is $L_{P}$-smooth with $L_P \triangleq (\kappa+1)L$ and its gradient can be written as $\nabla P(x) = \nabla_x f(x,y^*(x))$, where we define
$y^*(x) \triangleq \arg \max_{y \in \fY} f(x,y)$.
\end{prop}

The differentiability of $P(x)$ allows us to define the $\epsilon$-stationary point.
\begin{dfn}
We call $\hat x$ an $\epsilon$-stationary point if it holds that $\Vert \nabla P(\hat x) \Vert \le \epsilon$.
\end{dfn}

The goal of algorithms is to find an $\epsilon$ stationary point of $P(x)$.
In the setting of stochastic optimization, we suppose an algorithm can get access to the stochastic first-order oracle that satisfies the following assumptions. Note that the bounded variance assumption is only required for the online case.
\begin{asm}\label{asm:SFO}
We suppose each stochastic first-order oracle (SFO) $\nabla F_i(x,y;\xi_i)$ is unbiased and has bounded variance. That is, we have
\begin{align*}
\BE[\nabla F_i(x,y;\xi_i)]=\nabla f_i(x,y)    
\end{align*}
and
\begin{align*}
    \BE\Norm{\nabla f_i(x,y)-\nabla F_i(x,y;\xi_i)}^2\leq \sigma^2 
\end{align*}
with $ \sigma^2 <  +\infty$.
\end{asm}

The mixing matrix tells us how the agents in the network communicate with their neighbors. We assume it satisfies the following assumption~\citep{song2023optimal}.

\begin{asm}\label{asm:W}
We suppose the matrix $W\in\BR^{m\times m}$ have the following properties:
\begin{enumerate}[label=\alph*.]
    \item supported on the network: $W_{i,j} \ge 0$ if and only if $i$ and $j$ are connected in the network.
    \item irreducible: $W$ cannot be conjugated into block upper triangular form by a permutation matrix.
    \item symmetric: $W=W^\top$.
    \item doubly stochastic: $W \vone = W^\top \vone = \vone$.
    \item positive semidefinite: $W \succeq O_m$.
\end{enumerate}
\end{asm}

Note that if a matrix $W$ satisfies Assumption \ref{asm:W}  a-d, we can let Assumption \ref{asm:W}e be automatically satisfied by choosing $(W + I_m) /2$ to be the new mixing matrix.

By the Perron–Frobenius theorem, the eigenvalues of $W$ can be sorted by 
\begin{align*}
0 \le \lambda_m(W) \le \cdots \le \lambda_2(W) < \lambda_1(W) =1.     
\end{align*}
We then define the spectral gap of $W$ as follows.

\begin{dfn} \label{dfn:spectral-gap}
    For a matrix $W$ that satisfies Assumption \ref{asm:W}, we define the spectral gap as $\delta \triangleq 1-\lambda_2(W)$.
\end{dfn}

It is well known that the spectral gap of $W$ is related to the mixing rate on the network.

\begin{prop}[{\citet[Lemma 16]{koloskova2019decentralized}}]
Given a matrix $W$ that satisfies Assumption \ref{asm:W}, for any vector $\va \in \BR^{m \times d}$, for the standard mixing iterate given by $\va^{(k+1)} = W \va^{(k)}$,
we have
  \begin{align*}
      \Vert W \va - \vone \bar \va \Vert \le (1-\delta)^K \Vert \va - \vone \bar \va \Vert,
  \end{align*}
where $\Norm{\,\cdot\,}$ is the Frobenius norm.
\end{prop}


This simple mixing strategy is adopted by previous works including \citet{tsaknakis2020decentralized,xian2021faster,zhang2021taming}, but it would lead to an unavoidable communication complexity dependency of at least $\fO({1}/{\delta})$, which is suboptimal in the dependency of $\delta$.
To accelerate the mixing rate, we introduce the \FM~ sub-procedure \citep{liu2011accelerated}, which can lead to the optimal $\fO({1}/{\sqrt{\delta}}\,)$ dependency. 

\begin{prop}[{\citet[Proposition 1]{ye2020multi}}] \label{prop:fm}
Given a matrix $W$ that satisfies Assumption \ref{asm:W}, running 
Algorithm~\ref{alg:fm} ensures $\frac{1}{m}\vone^\top\va^{(K)} = \bar{a}^{(0)}$ and
\begin{align*}
\big\|\va^{(K)}-\vone\bar{a}^{(0)}\big\| \leq  c_1 \big( 1 - c_2 \sqrt{\delta}\,\big)^K \big\|\va^{(0)}-\vone\bar{a}^{(0)}\big\|,
\end{align*}
where $\bar{a}^{(0)} = \frac{1}{m}\vone^\top\va^{(0)}$, $\Norm{\,\cdot\,}$ is the Frobenius norm,  $c_1 = \sqrt{14}$ and $c_2 = 1 - 1/\sqrt{2}$. 
\end{prop}

To tackle the possible constraint in $y$,
we define the projection and {the constrained reduced gradient} ~\citep{nesterov2018lectures}.
\begin{dfn}
We define 
\begin{align*}
\small\begin{split}    
\Pi(y) \triangleq \argmin_{y' \in \fY}  \Vert y' - y \Vert^2 ~\text{and}~{\bf\Pi}(\vy)\triangleq \argmin_{\vy'(i) \in \fY}  \Vert \vy' - \vy \Vert^2
\end{split}
\end{align*}
for $y\in\BR^{d_y}$ and $\vy\in\BR^{m\times d_y}$ respectively.
\end{dfn}

\begin{dfn} \label{dfn:gmap}
We also define {the constrained reduced gradient}  of $f$ at $(x, y)$ with respect to $y$ as 
\begin{align*}
    G_{\eta}(x,y) = \frac{\Pi(y+ \eta \nabla_y f(x,y)) - y}{\eta}
\end{align*}
with some $0 < \eta  \le 1/L$.
\end{dfn}

\begin{algorithm*}[htbp]
\caption{Decentralized Recursive-gradient dEscent Ascent Method (\texttt{DREAM})} \label{alg:DREAM}
\small
\begin{algorithmic}[1]
\STATE \textbf{Notations:} Let $\vz_t = [\vx_t, \vy_t]\in\BR^{m\times d}$ and $\vs_t = [\vu_t, \vv_t]\in\BR^{m\times d}$. 
\\[0.15cm]
\STATE \textbf{Input:} initial parameter $\bz_0\in\BR^d$, stepsize $\eta>0$, stepsize ratio $\gamma \in (0,1]$,  probability $p\in(0,1]$, small mini-batch size $b$, large mini-batch size $b'$ (we set $b'=n$ for the offline case), initial communication rounds $K_0$, small communication rounds $K$, large communication rounds $K'$. \\[0.15cm]
\STATE $\vz_0=\vone \bz_0$ \\[0.15cm]
\STATE\textbf{parallel for} $i = 1, \dots, m$ \textbf{do}  \\[0.15cm]
\STATE \quad Sample $ \displaystyle {\fS_0(i)=  
\begin{cases}
    \{\xi_{i,1},\dots,\xi_{i,b'}\} ~ \text{i.i.d.}, & \text{online case}; \\[0.15cm]
    \{\xi_{i,1},\dots,\xi_{i,n}\}, & \text{offline case}; \\[0.1cm]
\end{cases}
}$ \\[0.2cm]
\STATE \quad  $\displaystyle{\vg_0(i)=\dfrac{1}{b'}\sum_{\xi_{i,j}\in \fS_0(i)}\nabla F_i(\vz_0(i);\xi_{i,j})}$ \\[0.15cm]
\STATE\textbf{end parallel for} \\[0.15cm]
\STATE $\vs_0= \FM(\vg_0,K_0)$ \\[0.15cm]
\STATE \textbf{for} $t = 0, \dots, T-1$ \textbf{do}\\[0.15cm]
\STATE \quad Sample $\zeta_t \sim {\rm Bernoulli}(p)$ \\[0.15cm]
\STATE\quad $\vx_{t+1} = \FM\big( \vx_t - \gamma\eta \vu_t,K\big)$ \\[0.15cm]
\STATE\quad $\vy_{t+1} = \FM\big( {\bf\Pi}(\vy_t + \eta \vv_t), K\big)$ \\[0.15cm]
\STATE\quad \textbf{parallel for} $i = 1, \dots, m$ \textbf{do}\\[0.15cm]
\STATE \quad \quad \textbf{if } $\zeta_t = 1$ \textbf{do} \\[0.15cm] 
\STATE \quad \quad \quad Sample $ \displaystyle {\fS_t'(i)=  
\begin{cases}
    \{\xi_{i,1},\dots,\xi_{i,b'}\} ~ \text{i.i.d.}, & \text{online case}; \\[0.15cm]
    \{\xi_{i,1},\dots,\xi_{i,n}\}, & \text{offline case}; \\[0.1cm]
\end{cases}
}$ \\[0.2cm]
\STATE \quad \quad \quad $ \displaystyle{\vg_{t+1}(i) = \dfrac{1}{b'}\sum_{\xi_{i,j}\in \fS_t'(i)}\!\!\nabla F_i(\vz_{t+1}(i);\xi_{i,j})} $ \\[0.15cm]
\STATE \quad \quad \textbf{else} \\[0.15cm]
\STATE \quad \quad \quad Sample $\omega_t(i) \sim {\rm Bernoulli}(q)$ \\[0.15cm]
\STATE \quad \quad \quad Sample $\fS_t(i) = \{\xi_{i,1},\dots,\xi_{i,b}\} $ i.i.d. \\[0.15cm]
\STATE \quad \quad \quad $\displaystyle{\vg_{t+1}(i) = \vg_t(i) + \dfrac{\omega_t(i)}{bq}\sum_{\xi_{i,j}\in \fS_t(i)}\!\!\big(\nabla F_i(\vz_{t+1}(i);\xi_{i,j}) - \nabla F_i(\vz_t(i);\xi_{i,j})\big)}$ \\[0.15cm]
\STATE \quad \quad \textbf{end if} \\[0.15cm]
\STATE\quad\textbf{end parallel for} \\[0.15cm]
\STATE \quad $\displaystyle{\vs_{t+1} =
\begin{cases}
\begin{aligned}
\FM \big(\vs_{t}+\vg_{t+1}-\vg_{t},K'\big), \quad \quad & \text{if } \zeta_t = 1;  \\[0.1cm]
\FM\big(\vs_{t}+\vg_{t+1}-\vg_{t},K\big), \quad \quad & \text{if } \zeta_t = 0; \\[0.1cm]
\end{aligned}
\end{cases}
} $ \\[0.15cm]
\STATE\textbf{end for} \\[0.15cm]
\STATE\textbf{Output:} $x_{\rm out}$  by uniformly sampling from $\{\vx_{0}(1), \vx_0(2), \cdots, \vx_{T-1}(m)  \}$ \\[0.15cm]
\end{algorithmic}
\end{algorithm*}

\section{The Proposed Algorithm}

In this section, we propose a novel stochastic algorithm named Decentralized Recursive-gradient dEscent Ascent Method (\texttt{DREAM}) for decentralized nonconvex-strongly-concave minimax problems.  
We provide a unified framework for analyzing our \texttt{DREAM} for both online and offline cases.
It shows the algorithm can find an $\epsilon$-stationary point of $P(x)$ within at most $\fO(\kappa^3 \epsilon^{-3})$ and $\fO(N + \sqrt{ N} \kappa^2 \epsilon^{-2})$ SFO calls for the online and offline setting respectively; and both of two settings require at most~$ \fO(\kappa^2 \epsilon^{-2} \log m/ \sqrt{\delta})$ communication rounds. 

\subsection{ Method Overview}

\texttt{DREAM} constructs stochastic recursive gradients $\vg_t(i)$ \citep{fang2018spider,nguyen2017sarah,li2021page,luo2020stochastic} to estimate the local gradients, \textit{i.e.} $\vg_t(i) \approx \nabla f_i(\vz_t(i)) $. 
This step (Line 22 in Algorithm~\ref{alg:DREAM}) reduces the variance of stochastic gradient estimators on each agent, leading to the optimal $\fO(\epsilon^{-3})$ dependency in the SFO upper complexity bound. 
\texttt{DREAM} then 
applies the gradient tracking technique~\citep{qu2019accelerated,di2016next} to track the average gradient over the network via the gradient tracker  $\vs_t(i)$, \textit{i.e.} $\vs_t(i) \approx \nabla f(\vz_t(i)) $. The gradient tracker $\vs_t(i)$ is also updated in a recursive way (Line 24 in Algorithm~\ref{alg:DREAM}). This step is commonly used to achieve convergence when the data distribution on each agent does not satisfy the i.i.d assumption in decentralized optimization~\citep{nedic2017achieving,song2023optimal}. With the gradient tracker, \texttt{DREAM} applies the two-timescale gradient descent ascent~\citep{lin2020gradient} to solve the maximization of $y$ and minimization of $x$ simultaneously (Line 13-14 in Algorithm~\ref{alg:DREAM}).
The random variable $\zeta_t \sim {\rm Bernoulli}(p)$ (Line 12 in Algorithm~\ref{alg:DREAM}) decides the batch size and the number of consensus steps in the iteration. By appropriately choosing the probability~$p$, batch sizes~$b,b'$, and consensus steps~$K,K'$, \texttt{DREAM}  achieves the best-known computation complexity and communication complexity trade-off in decentralized nonconvex-strongly-concave minimax optimization.




\subsection{A Novel Lyapunov Function}

Different from previous works~\citep{zhang2021taming,xian2021faster,luo2020stochastic}, we propose a novel Lyapunov function as follows:
\begin{align} \label{dfn:Phi}
\begin{split}
    \Phi_t &\triangleq \Psi_t  + \frac{1}{m\eta} C_t + \frac{\eta}{m p} V_t + \frac{\eta}{p} U_t, \quad {\rm where} \\
    \Psi_t &= P(\bx_t) - P^* + \alpha ( P(\bx_t) - f(\bx_t,\by_t)),  \\
    C_t &= \Vert \vz_t - \vone \bz_t \Vert^2 + \eta^2 \Vert \vs_t - \vone \bar s_t \Vert^2 \\
    V_t &=  \frac{1}{m} \Vert \mathbf{g}_t - \nabla \mathbf{f} (\mathbf{z}_t) \Vert^2 ~~\text{and}~~ \\
    U_t &= \left \Vert \frac{1}{m} \sum_{i=1}^m  (\vg_t(i) - \nabla f_i(\vz_t(i))) \right \Vert^2.
\end{split}
\end{align}
We set $\alpha \in (0,1]$ in later analysis.
Below, we illustrate the meaning of each quantity in the Lyapunov function.
\begin{itemize}[left=1.5em]
\item $\Psi_t$ measures the optimization error. The gradient descent ascent step ensures that $\Psi_t$ can decrease monotonically at each iteration ~\citep{yang2020global,chen2022faster}.
\item $C_t$ measures the consensus error, which can be bounded by the properties of gradient tracking and consensus steps~\citep{song2023optimal}.
\item $V_t$ and $U_t$ measure the variance of $\vg_t$, which can be bounded by the property of martingale~\citep{fang2018spider,luo2020stochastic}.
We provide a detailed discussion for the roles of $V_t$ and~$U_t$ after Lemma~\ref{lem:Lyp-V} and \ref{lem:Lyp-U}.
\end{itemize}

We can show that $C_t$, $V_t$, and $U_t$ are sufficiently small during the iterations, which allows us to analyze \texttt{DREAM} by analogizing gradient descent ascent on the mean variables~$\bx_t$ and $\by_t$ with some small noise.
\begin{rmk}
Our Lyapunov function $\Phi_t$ characterizes the sub-optimality of the maximization problem $\max_{y\in\fY} f(\bx_t,y)$ by $P(\bx_t) - f(\bx_t,\by_t)$, which is easy to be analyzed for stochastic variance reduced algorithm in the decentralized setting.
In contrast, \citet{xian2021faster} and \citet{zhang2021taming} measure the sub-optimality by~$\Vert \by_t-y^*(\bx_t) \Vert^2$, which leads to their analysis be more complicated than ours.
\end{rmk}

\subsection{Convergence Analysis}
Our analysis starts from the following descent lemma.
\begin{lem}\label{lem:Lyp-P}
For Algorithm \ref{alg:DREAM}, we set the parameters by $\eta \le 1/(4L)$ and 
\begin{align} \label{choice:lambda}
    \gamma= \frac{\alpha}{(1+\alpha)128 \kappa^2}.
\end{align}
Then for any $\alpha>0$ it holds that
\begin{align*} 
\BE[\Psi_{t+1}] &\le  \Psi_t - \frac{\gamma\eta}{2} \Vert \nabla P(\bar x_t) \Vert^2 - \frac{1}{8 \gamma\eta} \BE\Vert \bx_{t+1} - \bx_t \Vert^2 \\
&- \frac{\alpha}{16 \eta} \BE\Vert \bz_{t+1} - \bz_t \Vert^2 + 6 \alpha \eta U_t + \frac{3 \alpha }{m \eta} C_t.
\end{align*}
\end{lem}

If both $C_t$ and $U_t$ are sufficiently small, the above lemma indicates that the optimization error decreases by roughly $(\gamma\eta / 2) \Vert \nabla P(\bx_t) \Vert^2$ at each step in expectation and the remaining proof can follow the gradient descent~\citep{nesterov2018lectures,bubeck2015convex} on the primal function $P(x)$.

Recall Proposition \ref{prop:fm}.
We further define
the discount factors of consensus error that arises from mixing  steps (Line 10, 13, 14 and 24 in Algorithm \ref{alg:DREAM}) as:
\begin{align}\label{dfn:rho}
\begin{split}
    \rho_0 &= c_1\big(1-c_2 \sqrt{\delta}\big)^{K_0}, \\
\rho' &= c_1\big(1-c_2\sqrt{\delta}\big)^{K'}, \\
\rho &= c_1\big(1-c_2 \sqrt{\delta}\big)^K.
\end{split}
\end{align}

Then we can bound the consensus error as follows.
\begin{lem}\label{lem:Lyp-C}
For Algorithm \ref{alg:DREAM}, let  $\rho^2\le1/24$. Then 
\begin{align*}
\BE[C_{t+1}] \le&  12 c\rho^2  C_t + 6 \rho'^2 \eta^2 m V_t  +2c\rho^2 m \mathbb{E}\Vert \bar z_{t+1} - \bar z_t \Vert^2 \\
&\quad +\frac{ 6 \rho'^2 m \eta^2  \sigma^2}{b'}  \mathbb{I}[b' < n],
\end{align*}
where we define $c = \max \{ 1/(bq),1 \} $.
\end{lem}
\begin{rmk}
For convenience, we define $n=+\infty$ for the online case and $b'=n$ for the offline case. Then
\begin{align*}
\mathbb{I}[b' < n] =
\begin{cases}
1, & \text{for online case}, \\
0, & \text{for offline case}. 
\end{cases}
\end{align*}
This notation allows us to present the analysis for both cases in one unified framework.
\end{rmk}

Lemma~\ref{lem:Lyp-P} and \ref{lem:Lyp-C} mean the decrease of the optimization error~$\Psi_t$ and consensus error $C_t$ requires the reasonable upper bounds of $V_t$ and $U_t$, which can be characterized by the following recursions.

\begin{lem}\label{lem:Lyp-V}
For Algorithm \ref{alg:DREAM},  we have
\begin{align*} 
\BE[V_{t+1}] &\le (1-p) V_t  + \frac{4 (1-p) L^2}{mbq} C_t \\
&\quad +  \frac{p\sigma^2}{b'} \BI[b' <n ]  + \frac{3 (1-p)L^2 }{bq} \BE\Vert \bz_{t+1} - \bz_t \Vert^2.
\end{align*}
\end{lem}

\begin{lem}
\label{lem:Lyp-U}
For Algorithm \ref{alg:DREAM},  we have
\begin{align*} 
\BE[U_{t+1}] &\le (1-p) U_t  + \frac{4 (1-p) L^2}{m^2 bq} C_t \\
&\quad +  \frac{p\sigma^2}{m b'} \BI[b' <n ]  + \frac{3 (1-p)L^2 }{m bq} \BE\Vert \bz_{t+1} - \bz_t \Vert^2.
\end{align*}
\end{lem}

Note that for any vector sequence $a_1,\cdots,a_m$ we always have $\Vert \sum_{i=1}^m  a_i \Vert^2 \le m \sum_{i=1}^m \Vert a_i \Vert^2$, which directly implies $U_t \le V_t$. Therefore one can use the quantity $ V_t$ only in the analysis as \citet{zhang2021taming}, but the separation of $U_t$ and $V_t$ makes our bound tighter. As a consequence, we show a linear speed-up in the SFO complexity with respect to the number of agents $m$ which was not shown by \citet{zhang2021taming}.

Putting Lemma \ref{lem:Lyp-P}, \ref{lem:Lyp-C} and \ref{lem:Lyp-V} together, we can prove the main result for \texttt{DREAM} as follows.

\begin{thm}\label{thm:main}
For Algorithm \ref{alg:DREAM} , we set parameters 
\begin{align} \label{para:val}
\begin{split}
& \eta = \frac{1}{ 48 L},\quad   b = \left \lceil \sqrt{\frac{b'}{m}} ~\right \rceil, \quad q = \frac{1}{b} \sqrt{\frac{b'}{m}}, \quad  p = \frac{b q }{bq+b'} \\
&T = \left \lceil  \frac{16 \Psi_0}{\gamma\eta \epsilon^2}+ \frac{2}{p} \right \rceil, \quad K_0 = \left \lceil  \frac{\log \left( 16 c_1/ (\gamma m \epsilon^2)\right)}{c_2 \sqrt{\delta}} \right \rceil, \\
& K = \left \lceil \frac{5 \log (c_1(m / b' +1))}{c_2 \sqrt{\delta}} \right \rceil, \quad K' = \left \lceil \frac{5 \log (c_1 m) }{c_2 \sqrt{\delta}} \right \rceil,
\end{split}
\end{align} 
where $c_1,c_2$ are defined in Proposition \ref{prop:fm}, $\gamma = \Theta(\kappa^{-2})$ follows (\ref{choice:lambda}), $\alpha = 1/8$ and 
\begin{align} \label{choice:b}
    b' = \begin{cases}
    \left \lceil 32 \sigma^2 / (\gamma m  \epsilon^2) \right \rceil, & \text{for online case}, \\
    n, & \text{for offline case}.
    \end{cases}
\end{align}
Then the output satisfies $\BE \Vert \nabla P(x_{\rm out}) \Vert \le \epsilon$ within the overall SFO complexity 
\begin{align*}
\begin{cases}
\fO\big( \kappa^2 \sigma^2 \epsilon^{-2} + \kappa^3 L \sigma \epsilon^{-3}\big), & \text{for online case}; \\[0.1cm]
\fO\big(mn + \sqrt{mn} \kappa^2 L \epsilon^{-2}\big), & \text{for offline case};
\end{cases} 
\end{align*}
and communication complexity
$\fO(({\kappa^2 L \epsilon^{-2}\log m})/\sqrt{\delta}\,)$.
\end{thm}

This theorem shows that \texttt{DREAM} achieves the best-known complexity guarantee both in computation and communication, either for online or offline cases.


\begin{figure*}[t]
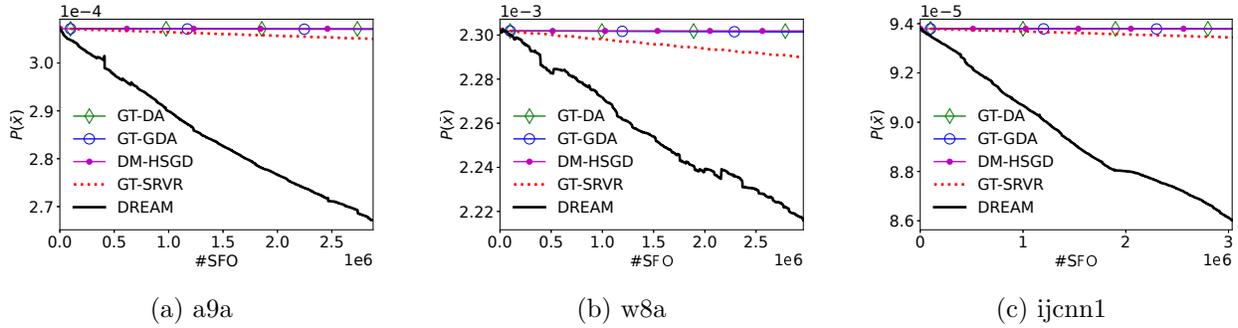
\centering
\begin{tabular}{ccc}
\includegraphics[scale=0.325]{img/a9a.sfo.eps} &
\includegraphics[scale=0.325]{img/w8a.sfo.eps} &
\includegraphics[scale=0.325]{img/ijcnn1.sfo.eps} \\
(a) a9a & (b) w8a & (c) ijcnn1
\end{tabular}\vskip-0.1cm
\caption{Comparison on the number of SFO calls against $  P(\bar x) \triangleq  \max_{y \in \Delta_N} f(\bar x,y)$.} 
\label{fig:sfo}
\end{figure*}\vskip0.5cm

\begin{figure*}[t]
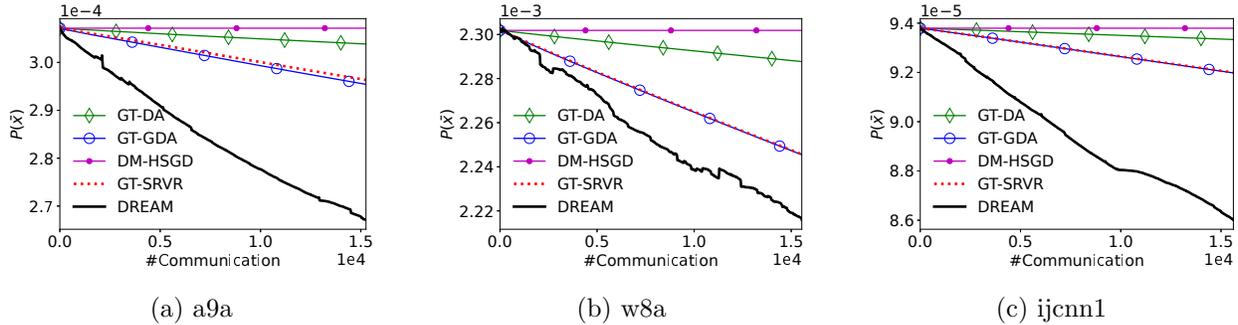
\centering
\begin{tabular}{ccc}
\includegraphics[scale=0.325]{img/a9a.comm.eps} &
\includegraphics[scale=0.325]{img/w8a.comm.eps} &
\includegraphics[scale=0.325]{img/ijcnn1.comm.eps} \\
(a) a9a & (b) w8a & (c) ijcnn1
\end{tabular}\vskip-0.1cm
\caption{Comparison on the number of communication rounds against $ P(\bar x) \triangleq  \max_{y \in \Delta_N} f(\bar x,y)$.} 
\label{fig:comm}
\end{figure*}\vskip0.5cm

\section{Experiments} \label{sec:exp}

We conduct numerical experiments on the model of robust logistic regression~\citep{luo2020stochastic,yan2019stochastic}. The source codes are available \footnote{https://github.com/TrueNobility303/DREAM}.  
It considers training the binary classifier $x\in\BR^d$ on 
dataset $\{(a_k, b_k)\}_{k=1}^{N}$, where $a_k\in\BR^{d}$ is the feature of the $k$-th sample and $b_k \in \{1,-1 \}$ is the corresponding label.
The total $N$ samples are equally distributed on $m$ agents, with $n = N / m$ samples on each agent. 
The decentralized minimax optimization problem of this model corresponds to formulation (\ref{main:prob})
with local functions
\begin{align*}
    f_i(x) \triangleq \frac{1}{n}\sum_{i=1}^n \big( y_{ij} l_{ij}(x) - V(y) + g(x) \big),
\end{align*}
where $l_{ij}(x) = \log(1+\exp(b_{ij} a_{ij}^\top x))$,
\begin{align*}
\small\begin{split}    
   & V(y) = \frac{1}{2 N^2} \Vert N y-\vone \Vert^2,~
   g(x) = \theta \sum_{k=1}^d \frac{\nu x_k^2}{1 + \nu x_k^2},~  \theta = 10^{-5}, \\
   & \nu=10~~\text{and}~~\Delta_N=\Big\{y\in\BR^N:y_k\in[0,1], \sum_{k=1}^N y_k=1\Big\}.
\end{split}
\end{align*}
We set mixing matrix $W$ as the $\pi$-lazy random walk matrix on a ring graph with 8 nodes ($m=8$) and let $\tau = 0.999$, which leads to a low consensus rate since~$\lambda_2(W) = \tau + (1-\tau) \cos\left( {2 \pi (m-1)}/{m}\right) $. 

We compare the performance of our proposed method \texttt{DREAM} with \texttt{DM-HSGD} \citep{xian2021faster},  \texttt{GT-DA}~\citep{tsaknakis2020decentralized}, \texttt{GT-GDA} and \texttt{GT-SRVR} \citep{zhang2021taming} on 
datasets ``a9a'', ``w8a'' and ``ijcnn1''~\citep{chang2011libsvm}.
All of the algorithms are implemented by MPI to simulate the distributed training scenario.

For \texttt{DREAM}, we tune $b,b'$ from $\{64,128,256, 512\}$, tune $p,q$ from $\{0.2,0.5,0.9 \}$ and tune $K_0,K,K'$ from $\{2,5,10\}$. For \texttt{DM-HSGD} , we set 
momentum parameter $\beta =0.01$ by following \citet{xian2021faster}, and 
tune $b,b_0$ from $\{64,128,256,512\}$.
For \texttt{GT-SRVR}, we let the epoch length $Q = \left \lceil \sqrt{n} \,\right \rceil $ as suggested by the authors.
For \texttt{GT-DA}, we let the number of inner loops $R = 4$ by following \citet{tsaknakis2020decentralized}.
For each algorithm, we tune $\eta$ and $\gamma$ from $\{1,0.1,0.01,0.001\}$ and  $\{0.1,0.01,0.001,0.0001\}$  respectively.

We present the empirical results for the comparisons of computation complexity and communication complexity in Figure \ref{fig:sfo} and \ref{fig:comm}. It is shown that the proposed \texttt{DREAM} performs apparently better than baselines. 

\section{Conclusion and Future Work}

In this paper, we have proposed an efficient algorithm \texttt{DREAM} for decentralized stochastic nonconvex-strongly-concave minimax optimization.
The theoretical analysis showed \texttt{DREAM} achieves the best-known SFO complexity of $\mathcal{O}(\min (\kappa^3\epsilon^{-3},\kappa^2 \sqrt{N} \epsilon^{-2} ))$ and communication complexity of $\tilde \fO( \kappa^2 \epsilon^{-2})$. 
The numerical experiments on robust logistic regression show the empirical performance of \texttt{DREAM} is obviously better than state-of-the-art methods. We discuss possible future directions and some subsequent works in Appendix H.

\subsubsection*{Acknowledgements}

Luo has been supported by National Natural Science Foundation of China (No. 62206058), Shanghai Sailing Program (22YF1402900), and Shanghai Basic Research Program (23JC1401000).
Ye has been supported by the National Natural Science Foundation of China (No. 12101491).

\bibliography{reference}
\bibliographystyle{plainnat}


\section*{Checklist}

The checklist follows the references. For each question, choose your answer from the three possible options: Yes, No, Not Applicable.  You are encouraged to include a justification to your answer, either by referencing the appropriate section of your paper or providing a brief inline description (1-2 sentences). 
Please do not modify the questions.  Note that the Checklist section does not count towards the page limit. Not including the checklist in the first submission won't result in desk rejection, although in such case we will ask you to upload it during the author response period and include it in camera ready (if accepted).

\textbf{In your paper, please delete this instructions block and only keep the Checklist section heading above along with the questions/answers below.}

 \begin{enumerate}

 \item For all models and algorithms presented, check if you include:
 \begin{enumerate}
   \item A clear description of the mathematical setting, assumptions, algorithm, and/or model. [Yes]
   \item An analysis of the properties and complexity (time, space, sample size) of any algorithm. [Yes]
   \item (Optional) Anonymized source code, with specification of all dependencies, including external libraries. [Yes]
 \end{enumerate}

 \item For any theoretical claim, check if you include:
 \begin{enumerate}
   \item Statements of the full set of assumptions of all theoretical results. [Yes]
   \item Complete proofs of all theoretical results. [Yes]
   \item Clear explanations of any assumptions. [Yes]     
 \end{enumerate}

 \item For all figures and tables that present empirical results, check if you include:
 \begin{enumerate}
   \item The code, data, and instructions needed to reproduce the main experimental results (either in the supplemental material or as a URL). [Yes]
   \item All the training details (e.g., data splits, hyperparameters, how they were chosen). [Yes]
         \item A clear definition of the specific measure or statistics and error bars (e.g., with respect to the random seed after running experiments multiple times). [Yes]
         \item A description of the computing infrastructure used. (e.g., type of GPUs, internal cluster, or cloud provider). [Yes]
 \end{enumerate}

 \item If you are using existing assets (e.g., code, data, models) or curating/releasing new assets, check if you include:
 \begin{enumerate}
   \item Citations of the creator If your work uses existing assets. [Yes]
   \item The license information of the assets, if applicable. [Not Applicable]
   \item New assets either in the supplemental material or as a URL, if applicable. [Not Applicable]
   \item Information about consent from data providers/curators. [Not Applicable]
   \item Discussion of sensible content if applicable, e.g., personally identifiable information or offensive content. [Not Applicable]
 \end{enumerate}

 \item If you used crowdsourcing or conducted research with human subjects, check if you include:
 \begin{enumerate}
   \item The full text of instructions given to participants and screenshots. [Not Applicable]
   \item Descriptions of potential participant risks, with links to Institutional Review Board (IRB) approvals if applicable. [Not Applicable]
   \item The estimated hourly wage paid to participants and the total amount spent on participant compensation. [Not Applicable]
 \end{enumerate}
 \end{enumerate}
\onecolumn
\aistatstitle{Supplementary Material: \\An Efficient Stochastic Algorithm for Decentralized Nonconvex-Strongly-Concave Minimax Optimization}

\appendix
\section{Some Useful Lemmas}\label{appendix:lemmas}

We first provide some useful lemmas.


\begin{lem}\label{lem:norm}
For any $a_1,\dots,a_m\in\BR^d$, we have 
\begin{align*}
\Norm{\frac{1}{m}\sum_{i=1}^m a_i}^2 \leq \frac{1}{m} \sum_{i=1}^m \Norm{a_i}^2.
\end{align*}
\end{lem}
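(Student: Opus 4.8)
The plan is to recognize this as a direct consequence of the convexity of the squared Euclidean norm, i.e., a special case of Jensen's inequality, though I will present an elementary self-contained argument via the Cauchy--Schwarz inequality so that nothing external is invoked.

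First I would expand the left-hand side using bilinearity of the inner product:
\begin{align*}
\Norm{\frac{1}{m}\sum_{i=1}^m a_i}^2 = \frac{1}{m^2}\sum_{i=1}^m\sum_{j=1}^m \InnerProd{a_i}{a_j}.
\end{align*}
Next I would bound each cross term by the arithmetic--geometric mean inequality, $\InnerProd{a_i}{a_j} \le \Norm{a_i}\Norm{a_j} \le \tfrac12\Norm{a_i}^2 + \tfrac12\Norm{a_j}^2$, the first step being Cauchy--Schwarz. Summing over all pairs $(i,j)$ and using symmetry in $i,j$, the double sum of $\tfrac12\Norm{a_i}^2 + \tfrac12\Norm{a_j}^2$ collapses to $m\sum_{i=1}^m \Norm{a_i}^2$. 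Dividing by $m^2$ then yields exactly the claimed bound.

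An alternative one-line route I would mention is the ``bias--variance'' identity: writing $\bar a = \frac1m\sum_{i=1}^m a_i$, one has $\frac1m\sum_{i=1}^m \Norm{a_i}^2 = \Norm{\bar a}^2 + \frac1m\sum_{i=1}^m\Norm{a_i - \bar a}^2 \ge \Norm{\bar a}^2$, since the last sum is nonnegative. I do not expect any real obstacle here; the only minor point to be careful about is the bookkeeping in the double summation, namely checking that $\sum_{i,j}\Norm{a_i}^2 = m\sum_i \Norm{a_i}^2$ because the index $j$ ranges freely over $m$ values, and symmetrically for the $\Norm{a_j}^2$ term.
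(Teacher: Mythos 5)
Your proof is correct. The paper states this lemma without proof (it is the standard Jensen/Cauchy--Schwarz fact for the squared Euclidean norm), and both of your arguments --- the expansion of the double sum combined with $\langle a_i,a_j\rangle \le \tfrac12\|a_i\|^2+\tfrac12\|a_j\|^2$, and the bias--variance identity --- are complete and correct; the bookkeeping $\sum_{i,j}\tfrac12(\|a_i\|^2+\|a_j\|^2)=m\sum_i\|a_i\|^2$ checks out.
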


\begin{lem}\label{lem:avg-norm}
For any matrix $\vz\in\BR^{m\times d}$ and $\bz=\frac{1}{m}\vone^\top\vz$, we have
\begin{align*}
\Norm{\vz - \vone\bz} \leq \Norm{\vz},
\end{align*}
\end{lem}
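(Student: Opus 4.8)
The plan is to write $\vz$ as the sum of $\vone\bz$ and the residual $\vz - \vone\bz$, observe that these two matrices are orthogonal in the Frobenius inner product, and then invoke the Pythagorean identity. The key computation is that, by the definition $\bz = \frac{1}{m}\vone^\top\vz$ together with $\vone^\top\vone = m$, we have $\vone^\top(\vz - \vone\bz) = \vone^\top\vz - (\vone^\top\vone)\bz = \vone^\top\vz - m\bz = 0$.

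From this I would compute the cross term $\langle \vone\bz,\, \vz - \vone\bz\rangle = \operatorname{tr}\!\big(\bz^\top\vone^\top(\vz - \vone\bz)\big) = 0$. Hence $\Norm{\vz}^2 = \Norm{\vone\bz + (\vz - \vone\bz)}^2 = \Norm{\vone\bz}^2 + \Norm{\vz - \vone\bz}^2 \ge \Norm{\vz - \vone\bz}^2$, and taking square roots gives the claim.

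An equivalent route is to note $\vz - \vone\bz = (I_m - \tfrac{1}{m}\vone\vone^\top)\vz$, where $J := \tfrac{1}{m}\vone\vone^\top$ is the orthogonal projector onto $\operatorname{span}(\vone)$ (again using $\vone^\top\vone = m$), so $I_m - J$ is also an orthogonal projector with operator norm at most $1$; then $\Norm{(I_m - J)\vz} \le \Norm{I_m - J}_{\mathrm{op}}\,\Norm{\vz} \le \Norm{\vz}$. Either way there is no real obstacle here; the only point requiring a little care is that $\Norm{\cdot}$ on matrices denotes the Frobenius norm, so that the submultiplicative bound $\Norm{AX}_F \le \Norm{A}_{\mathrm{op}}\Norm{X}_F$ (equivalently, applying the row-wise Pythagorean argument) is legitimate.
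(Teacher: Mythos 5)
Your proof is correct; the paper states this lemma without proof, and your argument — decomposing $\vz = \vone\bz + (\vz - \vone\bz)$, checking $\vone^\top(\vz - \vone\bz) = 0$ so the cross term in the Frobenius inner product vanishes, and applying the Pythagorean identity (equivalently, noting $I_m - \tfrac{1}{m}\vone\vone^\top$ is an orthogonal projector) — is exactly the standard argument the authors are implicitly relying on. No gaps.
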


\begin{lem}\label{lem:smooth-b}
Under Assumption~\ref{asm:smooth}, we have
\begin{align*}
\Norm{\nabla\vf(\vz)-\nabla \vf(\vz')} \leq  L\Norm{\vz-\vz'}    
\end{align*}
for any $\vz, \vz'\in\BR^{m\times d}$.
\end{lem}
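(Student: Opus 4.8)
The plan is to reduce this block/matrix statement to the per-agent smoothness built into Assumption~\ref{asm:smooth} by exploiting the fact that $\nabla\vf$ acts row-wise. Writing $z_1,\dots,z_m\in\BR^d$ for the rows of $\vz$ and $z_1',\dots,z_m'$ for the rows of $\vz'$, the stacked map $\nabla\vf(\vz)$ is the matrix whose $i$-th row is $\nabla f_i(z_i)$, so the $i$-th row of $\nabla\vf(\vz)-\nabla\vf(\vz')$ equals $\nabla f_i(z_i)-\nabla f_i(z_i')$ and depends only on the $i$-th blocks of $\vz$ and $\vz'$. This separability is what makes the argument essentially immediate.

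First I would expand the squared (Frobenius) norm on the left as a sum over agents,
\[
\Norm{\nabla\vf(\vz)-\nabla\vf(\vz')}^2 = \sum_{i=1}^m \Norm{\nabla f_i(z_i)-\nabla f_i(z_i')}^2 .
\]
Next I would invoke the $L$-smoothness of each local function $f_i$ furnished by Assumption~\ref{asm:smooth} to bound each summand by $L^2\Norm{z_i-z_i'}^2$. Summing these $m$ inequalities and recognizing that $\sum_{i=1}^m \Norm{z_i-z_i'}^2 = \Norm{\vz-\vz'}^2$ yields $\Norm{\nabla\vf(\vz)-\nabla\vf(\vz')}^2 \le L^2\Norm{\vz-\vz'}^2$, and taking square roots gives the claim.

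The only point requiring care — rather than a genuine obstacle — is matching conventions: confirming that $\nabla\vf$ is indeed the row-wise stacked gradient, so that cross terms between distinct agents never appear and the Frobenius norm splits cleanly, and that Assumption~\ref{asm:smooth} posits per-component $L$-smoothness with a uniform constant $L$. If the assumption instead bundles the $x$- and $y$-arguments of a minimax objective, the same computation goes through verbatim with $z_i$ read as the concatenated local variable. Note that no averaging or consensus structure (Lemmas~\ref{lem:norm} and~\ref{lem:avg-norm}) is needed here; the statement is purely a consequence of separability across agents together with the square-additivity of the Frobenius norm.
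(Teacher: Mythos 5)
Your proof is correct and is exactly the standard row-wise argument the paper leaves implicit (Lemma~\ref{lem:smooth-b} is stated without proof there): the Frobenius norm splits over agents, each block is handled by the per-agent $L$-smoothness of Assumption~\ref{asm:smooth}, and summing gives the claim.
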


\begin{lem}\label{lem:stst12}
For Algorithm \ref{alg:DREAM}, we have
$\bs_t=\frac{1}{m}\vone^\top\vg_t=\bg_t$.
\end{lem}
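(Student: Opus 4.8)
The plan is to establish $\bs_t=\bg_t$ by induction on the iteration counter $t$; note that the second equality $\frac1m\vone^\top\vg_t=\bg_t$ is merely the definition of $\bg_t$, so the only content is the first equality. The whole argument rests on two facts: the mixing matrix $\vW$ used in Algorithm~\ref{alg:DREAM} is doubly stochastic, so $\vone^\top\vW=\vone^\top$ (equivalently $\frac1m\vone^\top\vW=\frac1m\vone^\top$), and the averaging operator $\vz\mapsto\frac1m\vone^\top\vz$ is linear.

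For the base case, I would inspect the initialization of the algorithm: at the start (and, for the variance-reduced scheme, at the beginning of each epoch) the tracking variable is set to $\vs_t=\vg_t$, so $\bs_t=\frac1m\vone^\top\vs_t=\frac1m\vone^\top\vg_t=\bg_t$ is immediate. For the inductive step, suppose $\bs_t=\bg_t$. The tracking update has the form $\vs_{t+1}=\vW\big(\vs_t+\vg_{t+1}-\vg_t\big)$ (equivalently $\vs_{t+1}=\vW\vs_t+\vg_{t+1}-\vg_t$; both yield the same conclusion). Left-multiplying by $\frac1m\vone^\top$ and using $\frac1m\vone^\top\vW=\frac1m\vone^\top$ gives
\begin{align*}
\bs_{t+1}=\frac1m\vone^\top\big(\vs_t+\vg_{t+1}-\vg_t\big)=\bs_t+\bg_{t+1}-\bg_t=\bg_{t+1},
\end{align*}
where the last step invokes the inductive hypothesis. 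This closes the induction and proves the claim.

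The only point needing care is the bookkeeping at the periodic full-batch restarts that DREAM performs, where $\vg_t$ is recomputed with a large batch and $\vs_t$ is reset accordingly: rather than threading the recursion through such an iteration, I would simply treat each restart as a fresh base case and restart the induction there. Apart from this, the statement is a one-line consequence of double stochasticity of $\vW$ together with linearity of averaging, so I do not anticipate any genuine obstacle.
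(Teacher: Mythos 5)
Your proof is correct and follows essentially the same route as the paper: induction on $t$, with the base case from the initialization $\vs_0=\vg_0$ and the inductive step from the fact that the gossip/mixing step preserves the average (i.e.\ $\frac{1}{m}\vone^\top\vW=\frac{1}{m}\vone^\top$), so that $\bs_{t+1}=\bs_t+\bg_{t+1}-\bg_t=\bg_{t+1}$. The only superfluous bit is the caveat about large-batch restarts: in DREAM the tracking recursion $\vs_{t+1}=\vW(\vs_t+\vg_{t+1}-\vg_t)$ is applied at every iteration regardless of how $\vg_{t+1}$ is computed, so the induction passes through those iterations without needing a fresh base case.
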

\begin{proof}
We prove this lemma by induction.
For $t=0$, we have
\begin{align*}
\bs_0 =  \frac{1}{m}\vone^\top\vs_0 = \frac{1}{m}\vone^\top\vg_0.
\end{align*}
Suppose the statement holds for $t\leq k$. Then for $t=k+1$, the induction base means that
\begin{align*}
  & \bs_{k+1} \\
= & \bs_{k} + \bg_{k+1} - \bg_{k}  \\
= & \frac{1}{m}\vone^\top\vg(\vz_{k}) + \bg_{k+1} - \frac{1}{m}\vone^\top\vg(\vz_{k})  \\
= & \bg_{k+1}  \\
= & \frac{1}{m}\vone^\top\vg_{k+1},
\end{align*}
which finished the proof.
\end{proof}

\begin{lem} \label{lem:error-grad}
Under Assumption~\ref{asm:smooth}, for Algorithm \ref{alg:DREAM}, it holds that
\begin{align*}
    \Vert \bs_t - \nabla f(\bz_t) \Vert^2 \le 2 \left \Vert \frac{1}{m} \sum_{i=1}^m  (\vg_t(i) - \nabla f_i(\vz_t(i))) \right \Vert^2 + \frac{2L^2}{m} \Vert \vz_t - \vone \bz_t \Vert^2.
\end{align*}

\end{lem}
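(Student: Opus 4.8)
The plan is to split $\bs_t-\nabla f(\bz_t)$ into a \emph{stochastic-error} part and a \emph{consensus-error} part and bound each one separately. First I would invoke Lemma~\ref{lem:stst12} to write $\bs_t=\frac{1}{m}\sum_{i=1}^m\vg_t(i)$, and use the definition $f=\frac{1}{m}\sum_{i=1}^m f_i$ so that $\nabla f(\bz_t)=\frac{1}{m}\sum_{i=1}^m\nabla f_i(\bz_t)$. Adding and subtracting $\frac{1}{m}\sum_{i=1}^m\nabla f_i(\vz_t(i))$ then gives the decomposition
\begin{align*}
\bs_t-\nabla f(\bz_t)
=\frac{1}{m}\sum_{i=1}^m\bigl(\vg_t(i)-\nabla f_i(\vz_t(i))\bigr)
+\frac{1}{m}\sum_{i=1}^m\bigl(\nabla f_i(\vz_t(i))-\nabla f_i(\bz_t)\bigr).
\end{align*}

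Next, applying the elementary bound $\Norm{a+b}^2\le 2\Norm{a}^2+2\Norm{b}^2$ separates the two contributions; the first one is exactly the first term on the right-hand side of the claimed inequality. For the second term I would apply Lemma~\ref{lem:norm} to pull the average outside the squared norm, yielding $\frac{1}{m}\sum_{i=1}^m\Norm{\nabla f_i(\vz_t(i))-\nabla f_i(\bz_t)}^2$, and then use the $L$-smoothness of each $f_i$ from Assumption~\ref{asm:smooth} (equivalently, Lemma~\ref{lem:smooth-b} applied with $\vz=\vz_t$ and $\vz'=\vone\bz_t$, whose left-hand side is precisely $\sum_{i=1}^m\Norm{\nabla f_i(\vz_t(i))-\nabla f_i(\bz_t)}^2$) to bound this by $\frac{L^2}{m}\sum_{i=1}^m\Norm{\vz_t(i)-\bz_t}^2=\frac{L^2}{m}\Norm{\vz_t-\vone\bz_t}^2$. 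Multiplying by the factor $2$ from the previous step produces the second term of the claim and completes the argument.

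There is essentially no serious obstacle here: the proof is a one-line decomposition followed by two standard inequalities. The only point requiring mild care is keeping the $1/m$ normalizations consistent between the averaged representation in $\BR^d$ and the stacked representation in $\BR^{m\times d}$, so that the Lipschitz estimate contributes a factor $L^2$ (rather than $mL^2$) after the averaging in Lemma~\ref{lem:norm}.
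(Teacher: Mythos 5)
Your proposal is correct and follows essentially the same route as the paper: write $\bs_t=\frac{1}{m}\sum_i\vg_t(i)$ via Lemma~\ref{lem:stst12}, split off $\frac{1}{m}\sum_i\nabla f_i(\vz_t(i))$ with Young's inequality, then apply Lemma~\ref{lem:norm} and the $L$-smoothness of each $f_i$ to obtain the $\frac{2L^2}{m}\Vert\vz_t-\vone\bz_t\Vert^2$ term. The bookkeeping of the $1/m$ factors is handled exactly as in the paper's own argument.
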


\begin{proof}
Using Young's inequality and Assumption~\ref{asm:smooth}, we have
\begin{align*}
&  \Norm{\bs_t-\nabla f(\bz_t)}^2 \\
=& \Norm{\frac{1}{m}\sum_{i=1}^m \big(\vg_t(i)-\nabla f_i(\bz_t)\big)}^2 \\ 
\le& 2 \left \Vert \frac{1}{m} \sum_{i=1}^m  (\vg_t(i) - \nabla f_i(\vz_t(i))) \right \Vert^2 + 2 \Norm{\frac{1}{m} \sum_{i=1}^m (\nabla f_i(\vz_t(i)) - \nabla f_i(\bz_t)) }^2 \\
\le & 2 \left \Vert \frac{1}{m} \sum_{i=1}^m  (\vg_t(i) - \nabla f_i(\vz_t(i))) \right \Vert^2 + \frac{2}{m} \sum_{i=1}^m \Vert \nabla f_i(\vz_t(i)) - \nabla f_i(\bz_t) \Vert^2 \\
\le 2 &\ \left \Vert \frac{1}{m} \sum_{i=1}^m  (\vg_t(i) - \nabla f_i(\vz_t(i))) \right \Vert^2 + \frac{2L^2}{m}\Norm{\vz_t-\vone\bz_t}^2. 
\end{align*}
\end{proof}


\begin{lem} \label{lem:hypergrad}
When $f(x,y)$ is $L$-smooth and $\mu$-strongly-concave in $y$, it holds that
\begin{align*}
    \Vert \nabla P(x) - \nabla_x f(x,y) \Vert \le 2 \kappa \Vert G_{\eta}(x,y) \Vert,
\end{align*}
where $G_{\eta}(x,y) $ denotes {the constrained reduced gradient} 
\begin{align*}
    G_{\eta}(x,y) = \frac{\Pi(y+ \eta \nabla_y f(x,y)) - y}{\eta}
\end{align*}
with any $\eta \le 1/L$.
\end{lem}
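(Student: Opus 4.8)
The plan is to pass through the optimal inner variable. Write $y^\star(x):=\arg\max_{y\in\mathcal Y}f(x,y)$, which is well-defined and unique by $\mu$-strong concavity, where $\mathcal Y$ is the feasible set onto which $\Pi$ projects. By Danskin's theorem $\nabla P(x)=\nabla_x f(x,y^\star(x))$, so by $L$-smoothness of $f$,
\begin{align*}
\Vert\nabla P(x)-\nabla_x f(x,y)\Vert=\Vert\nabla_x f(x,y^\star(x))-\nabla_x f(x,y)\Vert\le L\Vert y^\star(x)-y\Vert .
\end{align*}
Since $2\kappa=2L/\mu$, it therefore suffices to establish the bound $\Vert y^\star(x)-y\Vert\le(2/\mu)\Vert G_\eta(x,y)\Vert$, i.e.\ to show that the distance to the inner optimum is controlled by the norm of the projected (reduced) gradient.

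To do that, fix $x$ and set $h(\cdot):=-f(x,\cdot)$, which is $\mu$-strongly convex and $L$-smooth on $\mathcal Y$; then $y^\star(x)=\arg\min_{y\in\mathcal Y}h(y)$ and the point $y^+:=\Pi(y-\eta\nabla h(y))=\Pi(y+\eta\nabla_y f(x,y))$ satisfies $G_\eta(x,y)=(y^+-y)/\eta$. The first-order optimality condition at $y^\star(x)$ gives the fixed-point identity $y^\star(x)=\Pi\big(y^\star(x)-\eta\nabla h(y^\star(x))\big)$. Using nonexpansiveness of $\Pi$ together with $\eta L\le 1$, co-coercivity of the smooth convex $h$, and $\mu$-strong convexity, one gets the standard contraction estimate
\begin{align*}
\Vert y^+-y^\star(x)\Vert^2\le\big\Vert y-y^\star(x)-\eta(\nabla h(y)-\nabla h(y^\star(x)))\big\Vert^2\le(1-\eta\mu)\Vert y-y^\star(x)\Vert^2 .
\end{align*}

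Finally, by the triangle inequality,
\begin{align*}
\Vert y-y^\star(x)\Vert\le\Vert y-y^+\Vert+\Vert y^+-y^\star(x)\Vert\le\eta\Vert G_\eta(x,y)\Vert+\sqrt{1-\eta\mu}\,\Vert y-y^\star(x)\Vert ,
\end{align*}
so $(1-\sqrt{1-\eta\mu})\Vert y-y^\star(x)\Vert\le\eta\Vert G_\eta(x,y)\Vert$; since $1-\sqrt{1-\eta\mu}=\eta\mu/(1+\sqrt{1-\eta\mu})\ge\eta\mu/2$, this yields $\Vert y-y^\star(x)\Vert\le(2/\mu)\Vert G_\eta(x,y)\Vert$, and substituting into the first display completes the proof. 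The main obstacle is the contraction step: getting the factor $\sqrt{1-\eta\mu}$ with the exact constants and being careful with the maximization-to-minimization sign flip so that the gradient mapping of $h$ is indeed $-G_\eta(x,y)$ (hence of equal norm); the remaining manipulations are routine.
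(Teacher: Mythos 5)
Your proof is correct and follows the same two-step skeleton as the paper's: first Danskin's theorem plus $L$-smoothness to reduce the claim to $\Vert y^\star(x)-y\Vert\le(2/\mu)\Vert G_\eta(x,y)\Vert$, then a bound on the distance to the inner optimum in terms of the reduced-gradient norm. The only difference is in the second step: the paper simply cites Corollary~1 of \citet[Appendix~A]{luo2020stochastic} for the inequality $\mu\Vert y-y^\star(x)\Vert\le 2\Vert G_\eta(x,y)\Vert$, whereas you prove it from scratch via the fixed-point characterization $y^\star(x)=\Pi\bigl(y^\star(x)-\eta\nabla h(y^\star(x))\bigr)$, nonexpansiveness of $\Pi$, and the projected-gradient contraction, followed by the triangle inequality and $1-\sqrt{1-\eta\mu}\ge\eta\mu/2$. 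Your self-contained argument checks out, including the one delicate point: the contraction factor $\sqrt{1-\eta\mu}$ under only $\eta\le 1/L$ does require the co-coercivity form of strong convexity plus smoothness (i.e.\ $\langle\nabla h(y)-\nabla h(y^\star),y-y^\star\rangle\ge\tfrac{\mu L}{\mu+L}\Vert y-y^\star\Vert^2+\tfrac{1}{\mu+L}\Vert\nabla h(y)-\nabla h(y^\star)\Vert^2$ together with $\eta\le 1/L\le 2/(\mu+L)$), which you correctly invoke; the naive combination of strong monotonicity and Lipschitzness alone would not give this constant. The net effect is that your version makes the lemma self-contained at the cost of a few extra lines, while the paper's version outsources exactly this step to an external corollary; the constants and conclusion agree.
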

\begin{proof}
Using the Danskin's theorem, i.e. $\nabla P(x) = \nabla_x f(x,y^*(x))$ as well as the $L$-smoothness, we have
\begin{align}\label{ieq:smooth-grad_x}
    \Vert \nabla P(x) - \nabla_x f(x,y) \Vert \le L \Vert y^*(x) - y \Vert.
\end{align}
The Corollary 1 of \citet[Appendix A]{luo2020stochastic} tells us
\begin{align}\label{ieq:mapping}
    \mu \Vert y - y^*(x) \Vert \le 2 \Vert G_{\eta} (x,y) \Vert.
\end{align}
We prove this lemma by combing inequalities (\ref{ieq:smooth-grad_x}) and (\ref{ieq:mapping}).
\end{proof}

\section{Discussions on DM-HSGD} \label{apx:DMHSGD}

Decentralized Minimax Hybrid Stochastic Gradient Descent (DM-HSGD) considers the minimax problem (\ref{main:prob}) in both unconstrained and constrained cases~\citep{xian2021faster}. 
However, the theoretical analysis of this method only works for unconstrained case, and it is problematic for the general constrained case.\footnote{The theoretical issue of DM-HSGD has also been mentioned by \citet{zhang2023jointly}.} 

The analysis of DM-HSGD heavily relies on its Lemma 5 \citep[Appendix A.2]{xian2021faster}.
However, the proof of this result is based on the following equation \citep[Equation (29)]{xian2021faster}\footnote{The analysis of DM-HSGD uses $\eta_y$ and $\bar u_t$ to present the stepsize and gradient estimator with respect to $y$, which play the similar roles of $\eta$ and $\bar v_t$ in our notations.} 
\begin{align}\label{eq:incorrect}
    2\eta_y  \langle  \bar u_t, \hat y_t - \bar y_t \rangle = \Vert \bar y_t -\hat y_t \Vert^2 + \Vert \bar y_{t+1} - \bar y_t \Vert^2 - \Vert \bar y_{t+1} - \hat y_t \Vert^2,
\end{align}
where $\hat y_t=\argmax_{y\in\fY}f(\bar x_t,y)$ corresponds to $y^*(\bar x_t)$ in our notations.
Note that Equation (\ref{eq:incorrect}) does not hold for general convex and compact set $\fY$. We further illustrate this below.

We denote $\vy_{t+1/2} = \vy_t + \eta_y \vu_t$ by following \cite{xian2021faster}'s notation. 
The procedure of DM-HSGD guarantees that $\bar y_{t+1} = \bar \Pi (\vy_{t+1/2} )$, then we have
\begin{align}\label{eq:correct}
    2\eta_y   \langle \bar u_t, \hat y_t - \bar y_t \rangle = 2 \langle \bar y_{t+1/2} - \by_t, \hat y_t - \by_t \rangle = \Vert \bar y_t - \hat y_t \Vert^2 + \Vert \bar y_{t+1/2} - \bar y_t \Vert^2 - \Vert \bar y_{t+1/2} - \hat y_t \Vert^2.
\end{align}
The only difference between equations (\ref{eq:incorrect}) and  (\ref{eq:correct}) is their last terms.
In the unconstrained case that $\fY=\BR^{d_y}$, it holds that 
\begin{align}\label{eq:eq}
    \bar y_{t+1} = \bar \Pi (\vy_{t+1/2}) = \bar y_{t+1/2},
\end{align}
which leads to
\begin{align*}
\Vert \bar y_{t+1} - \hat y_t \Vert^2 = \Vert \bar y_{t+1/2} - \hat y_t \Vert^2.
\end{align*}
However, the equation (\ref{eq:eq}) may not hold in the constrained case, since we can not guarantee $\bar \Pi (\vy) = \by$ in general. 
For example, consider that
\begin{align*}
\fY = \{ y \in \BR^2: \Vert y \Vert^2 =2 \},  \qquad \vy(1) = (2,2) 
\qquad\text{and}\qquad \vy(2) = (2,-2),     
\end{align*}
then we can have $\by = (2,0)$ while $\bar \Pi(\vy) = \left(\sqrt{2},0 \right)$.

For the analysis of projected first-order methods for constrained problems (even for minimization problems), we typically introduce { the constrained reduced gradient} 
\cite[Definition 2.2.3]{nesterov2018lectures} and apply the first-order optimality condition (such as Equation (\ref{eq:optimality-proj}) in Appendix \ref{appendix:C}) to establish the convergence results. 
However, such popular techniques are not included in the analysis of DM-HSGD.

In decentralized setting, the extension from unconstrained case to constrained case is non-trivial, since the projection step introduces additional consensus error. 
As a consequence, our analysis is essentially different from the  \citet{xian2021faster}. 
We design a novel Lyapunov function and present the details in Lemma \ref{lem:Lyp-P} to address this issue.
 


\section{The Proof of Lemma \ref{lem:Lyp-P}}\label{appendix:C}

\begin{proof}
Denote $ \mathbf{y}_{t+1}' = {\bf \Pi}( \mathbf{y}_t + \eta \vv_t )$. By Proposition \ref{prop:fm}, the update rules of  $\vx_t,\vy_t$ means
\begin{align}\label{update:avg}
\begin{split}
\bx_{t+1} &=  \bx_t - \gamma \eta \bu_t \qquad \text{and}\qquad \by_{t+1} = \by_{t+1}' = \bar{\Pi} (\vy_t + \eta \vv_t).
\end{split}
\end{align}
In the view of inexact gradient descent on $P(x)$, it yields
\begin{align} \label{eq:varphi}
\begin{split}
&\quad \mathbb{E}[P(\bar x_{t+1})] \\ &\le \mathbb{E} \left [ P(\bar x_t) + \nabla P(\bar x_t)^\top (\bar x_{t+1} - \bar x_t) + \frac{L_{P} }{2} \Vert \bar x_{t+1} - \bar x_t \Vert^2 \right] \\
&= \mathbb{E} \left [ P(\bar x_t) -\gamma \eta \nabla P(\bar x_t)^\top \bar u_t  + \frac{L_{P} \gamma^2 \eta^2 }{2} \Vert \bar u_t \Vert^2 \right] \\
&= \mathbb{E} \left [ P(\bar x_t) - \frac{\gamma \eta}{2} \Vert\nabla P(\bar x_t) \Vert^2 - \left( \frac{\gamma \eta}{2} - \frac{L_{P} \gamma^2 \eta^2 }{2}  \right)  \Vert \bar u_t \Vert^2+ \frac{\gamma \eta}{2} \Vert \nabla P(\bar x_t) - \bar u_t \Vert^2 \right] \\
&\le \mathbb{E} \left [ P(\bar x_t) - \frac{\gamma\eta}{2} \Vert\nabla P(\bar x_t) \Vert^2 - \left( \frac{\gamma\eta}{2} - \frac{L_{P} \gamma^2 \eta^2 }{2}  \right)  \Vert \bar u_t \Vert^2 \right] \\
&\quad + \mathbb{E} \left[ \frac{\gamma \eta}{m} \sum_{i=1}^m \Vert \nabla P(\bar x_t) -  \nabla_x f(\bar x_t,\mathbf{y}_t(i)) \Vert^2 + \frac{\gamma\eta}{m} \sum_{i=1}^m\Vert \bar u_t - \nabla_x f(\bar x_t,\mathbf{y}_t(i)) \Vert^2 \right] \\
&\le \mathbb{E} \left [ P(\bar x_t) - \frac{\gamma\eta}{2} \Vert\nabla P(\bar x_t) \Vert^2 - \left( \frac{\gamma\eta}{2} - \frac{L_{P} \gamma^2 \eta^2 }{2}  \right)  \Vert \bar u_t \Vert^2 \right] \\
&\quad + \mathbb{E} \left[ \frac{4 \kappa^2\gamma \eta}{m} \sum_{i=1}^m \Vert G_{\eta}(\bar x_t, \vy_t(i)) \Vert^2 + \frac{\gamma\eta}{m} \sum_{i=1}^m\Vert \bar u_t - \nabla_x f(\bar x_t,\vy_t(i)) \Vert^2 \right] \\
&\le \mathbb{E} \left [ P(\bar x_t) - \frac{\gamma\eta}{2} \Vert\nabla P(\bar x_t) \Vert^2 - \left( \frac{\gamma\eta}{2} - \frac{L_{P} \gamma^2 \eta^2 }{2}  \right)  \Vert \bar u_t \Vert^2 \right] \\
&\quad + \mathbb{E} \left[ \frac{8 \kappa^2\gamma \eta}{m} \sum_{i=1}^m \Vert \mathbf{v}_t'(i) \Vert^2 + \frac{8 \kappa^2\gamma \eta}{m} \sum_{i=1}^m \Vert \mathbf{v}_t(i) - \nabla_y f(\bar x_t, \mathbf{y}_t(i)) \Vert^2+ \frac{\gamma\eta}{m} \sum_{i=1}^m\Vert \bar u_t - \nabla_x f(\bar x_t,\mathbf{y}_t(i)) \Vert^2 \right],
\end{split}
\end{align}
where $\mathbf{v}_t'(i) = (\mathbf{y}_{t+1}'(i) - \mathbf{y}_t(i)) / \eta$. Above, the first inequality follows from the smoothness of $P(x)$ by Proposition \ref{lem:varphi-Lip}; the second one is due to the Young's inequality; the third inequality holds according to Lemma \ref{lem:hypergrad}; in the last one we use the Young's inequality along with
\begin{align*} 
    &\quad \Vert \vv_t'(i) - G_{\eta}(\bar x_t, \vy_t(i)) \Vert \\
    &= \frac{1}{\eta} \Vert \Pi( \vy_t+ \eta \vv_t(i)) - \Pi(\vy_t + \eta \nabla_y f(\bx_t, \vy_t(i))) \Vert \\
    &\le \Vert \vv_t(i) - \nabla_y f(\bx_t, \vy_t(i)) \Vert.
\end{align*}
Recall that $\vy_{t+1}' = {\bf \Pi}(\vy_t + \eta \vv_t)$ and the first-order optimality of the projection,  we have
\begin{align} \label{eq:optimality-proj} 
    (\mathbf{y}_t(i) + \eta  \mathbf{v}_t(i) - \mathbf{y}'_{t+1}(i))^\top (y - \mathbf{y}'_{t+1}(i)  ) &\le 0
\end{align}
for any $y \in \mathcal{Y}$.
Taking $y = \bar y_t$ in above inequality, we get
\begin{align}\label{eq:proj}
\begin{split}
&\quad \mathbf{v}_t(i)^\top ( \bar y_t - \mathbf{y}'_{t+1}(i))\\
&\le \frac{1}{\eta} (\bar y_t - \mathbf{y}'_{t+1}(i))^\top(\mathbf{y}'_{t+1}(i) - \mathbf{y}_t(i))  \\
&= \frac{1}{2 \eta} \Vert \bar y_t -  \mathbf{y}_t(i) \Vert^2 -\frac{1}{2\eta} \Vert \bar y_t - \mathbf{y}'_{t+1}(i) \Vert^2 - \frac{1}{2 \eta} \Vert \mathbf{y}'_{t+1}(i) - \mathbf{y}_t(i) \Vert^2. 
\end{split}
\end{align}
In the view of inexact gradient descent ascent on $f(x,y)$, it yields
\begin{align*}
\begin{split}
&\quad \mathbb{E}[- f(\bar x_{t+1}, \mathbf{y}'_{t+1}(i))] \\
&\le \mathbb{E}\left[ -f(\bar x_{t+1}, \mathbf{y}_t(i)) - \nabla_y f(\bar x_{t+1},\mathbf{y}_t(i))^\top (\mathbf{y}'_{t+1}(i) - \mathbf{y}_t(i))+ \frac{L}{2} \Vert \mathbf{y}'_{t+1}(i) -  \mathbf{y}_t(i) \Vert^2 \right] \\
&\le \mathbb{E} \left[ - f(\bar x_t,\mathbf{y}_t(i)) -  \nabla_x f(\bar x_t,\mathbf{y}_t(i))^\top (\bar x_{t+1} - \bar x_t) + \frac{L}{2} \Vert \bar x_{t+1} - \bar x_t \Vert^2 \right] \\
&\quad + \mathbb{E} \left[ - \nabla_y f(\bar x_{t+1},\mathbf{y}_t(i))^\top (\mathbf{y}'_{t+1}(i) - \mathbf{y}_t(i))+ \frac{L}{2} \Vert \mathbf{y}'_{t+1}(i) - \mathbf{y}_t(i) \Vert^2\right] \\
&\le \mathbb{E} \left[ - f(\bar x_t,\bar y_t)+\nabla_y f(\bar x_t, \mathbf{y}_t(i))^\top (\bar y_t - \mathbf{y}_t(i)) -  \nabla_x f(\bar x_t,\mathbf{y}_t(i))^\top (\bar x_{t+1} - \bar x_t) + \frac{L}{2} \Vert \bar x_{t+1} - \bar x_t \Vert^2 \right] \\
& \quad + \mathbb{E} \left[ - \nabla_y f(\bar x_{t+1},\mathbf{y}_t(i))^\top (\mathbf{y}'_{t+1}(i) - \mathbf{y}_t(i))+ \frac{L}{2} \Vert \mathbf{y}'_{t+1}(i) - \mathbf{y}_t(i) \Vert^2\right] \\
&= \mathbb{E} \left[ - f(\bar x_t,\bar y_t)+\nabla_y f(\bar x_t, \mathbf{y}_t(i))^\top (\bar y_t - \mathbf{y}'_{t+1}(i)) -  \nabla_x f(\bar x_t,\mathbf{y}_t(i))^\top (\bar x_{t+1} - \bar x_t) + \frac{L}{2} \Vert \bar x_{t+1} - \bar x_t \Vert^2 \right] \\
&\quad +  \mathbb{E} \left[ (\nabla_y f(\bar x_t, \mathbf{y}_t(i))- \nabla_y f(\bar x_{t+1},\mathbf{y}_t(i)))^\top (\mathbf{y}'_{t+1}(i) - \mathbf{y}_t(i))+ \frac{L}{2} \Vert \mathbf{y}'_{t+1}(i) - \mathbf{y}_t(i) \Vert^2\right] \\
&= \mathbb{E} \left[ - f(\bar x_t,\bar y_t)+\mathbf{v}_t(i)^\top (\bar y_t - \mathbf{y}'_{t+1}(i)) -  \nabla_x f(\bar x_t,\mathbf{y}_t(i))^\top (\bar x_{t+1} - \bar x_t) + \frac{L}{2} \Vert \bar x_{t+1} - \bar x_t \Vert^2 \right] \\
&\quad +  \mathbb{E} \left[ (\nabla_y f(\bar x_t, \mathbf{y}_t(i))- \nabla_y f(\bar x_{t+1},\mathbf{y}_t(i)))^\top (\mathbf{y}'_{t+1}(i) - \mathbf{y}_t(i))+ \frac{L}{2} \Vert \mathbf{y}'_{t+1}(i) - \mathbf{y}_t(i) \Vert^2\right] \\
&\quad + \mathbb{E} \left[ (\nabla_y f(\bar x_t, \mathbf{y}_t(i))- \mathbf{v}_t(i))^\top (\bar y_t -\mathbf{y}'_{t+1}(i)) \right] \\
&\le \mathbb{E} \left[ - f(\bar x_t,\bar y_t)- \frac{1}{2 \eta}\Vert \mathbf{y}'_{t+1}(i) - \mathbf{y}_t(i) \Vert^2 -  \nabla_x f(\bar x_t,\mathbf{y}_t(i))^\top (\bar x_{t+1} - \bar x_t) + \frac{L}{2} \Vert \bar x_{t+1} - \bar x_t \Vert^2 \right] \\
&\quad +  \mathbb{E} \left[ (\nabla_y f(\bar x_t, \mathbf{y}_t(i))- \nabla_y f(\bar x_{t+1},\mathbf{y}_t(i)))^\top (\mathbf{y}'_{t+1}(i) - \mathbf{y}_t(i))+ \frac{L}{2} \Vert \mathbf{y}'_{t+1}(i) - \mathbf{y}_t(i) \Vert^2\right] \\
&\quad + \mathbb{E} \left[ \frac{\eta}{2} \Vert\nabla_y f(\bar x_t, \mathbf{y}_t(i))- \mathbf{v}_t(i)) \Vert^2 + \frac{1}{2 \eta} \Vert \bar y_t -  \mathbf{y}_t(i) \Vert^2\right] \\
&\le \mathbb{E} \left[ - f(\bar x_t,\bar y_t)- \left(\frac{\eta}{4} - \frac{\eta^2 L}{2}\right)\Vert \mathbf{v}'_{t}(i)  \Vert^2  +\left( \frac{\gamma^2 \eta^2 L}{2} + \gamma^2 \eta^3 L^2 + \frac{3 \gamma^2 \eta}{2}\right) \Vert \bar u_t \Vert^2 \right] \\
&\quad +\mathbb{E} \left[  \frac{\eta}{2} \Vert \nabla_x f(\bar x_t,\mathbf{y}_t(i)) - \bar u_t \Vert^2 + \frac{\eta}{2} \Vert\nabla_y f(\bar x_t, \mathbf{y}_t(i))- \mathbf{v}_t(i)) \Vert^2 + \frac{1}{2 \eta} \Vert \bar y_t -  \mathbf{y}_t(i)
\Vert^2\right],
\end{split}
\end{align*}
where $\vv_t'(i) = (\vy_{t+1}'(i) - \vy_t(i)) / \eta$.  Above, the equations rely on rearranging the terms; the first two inequalities are based on the $L$-smoothness of the objective $f(x,y)$. The third one follows from the concavity of in the direction of $y$; the second last inequality is a use of (\ref{eq:proj}) and the Young's inequality; the last inequality follows from the Young's inequality and the $L$-smoothness that leads to
\begin{align*}
&\quad (\nabla_y f(\bx_t, \vy_t(i))- \nabla_y f(\bx_{t+1},\vy_t(i)))^\top (\vy_{t+1}'(i) - \vy_t(i)) \\
&\le \frac{1}{4 \eta} \Vert \vy_{t+1}'(i) - \vy_t(i) \Vert^2 + \eta \Vert \nabla_y f(\bx_{t+1}, \vy_t(i)) - \nabla_y f(\bx_t, \vy_t(i)) \Vert \\
&\le \frac{1}{4 \eta} \Vert \vy_{t+1}'(i) - \vy_t(i) \Vert^2 + \eta L^2 \Vert \bx_{t+1} - \bx_t \Vert^2 \\
&= \frac{\eta}{4} \Vert \vv_t'(i) \Vert^2 + \gamma^2 \eta^3 L^2 \Vert \bu_t \Vert^2.
\end{align*}

Using the concavity of $f(x,y)$ in variable $y$ as well as the Jensen's inequality, we have
\begin{align} \label{eq:fxy}
\begin{split}
& \mathbb{E}[-f(\bar x_{t+1}, \bar y'_{t+1})] \\
\le & \quad \mathbb{E} \left[ - f(\bar x_t,\bar y_t)- \left(\frac{\eta}{4} - \frac{\eta^2 L}{2}\right)\frac{1}{m} \sum_{i=1}^m \Vert \mathbf{v}'_{t}(i)  \Vert^2  +\left( \frac{\gamma^2 \eta^2 L}{2} + \gamma^2 \eta^3 L^2 + \frac{3 \gamma^2 \eta}{2}\right) \Vert \bar u_t \Vert^2 \right] \\
&\quad + \frac{1}{m} \sum_{i=1}^m \mathbb{E} \left[  \frac{\eta}{2} \Vert \nabla_x f(\bar x_t,\mathbf{y}_t(i)) - \bar u_t \Vert^2 + \frac{\eta}{2} \Vert\nabla_y f(\bar x_t, \mathbf{y}_t(i))- \mathbf{v}_t(i)) \Vert^2 + \frac{1}{2 \eta} \Vert \bar y_t -  \mathbf{y}_t(i) \Vert^2\right].
\end{split}
\end{align}

Note that $\bar y_{t+1}' = \bar y_{t+1}$. Adding (\ref{eq:varphi}) multiplying $(1+\alpha)$ along with (\ref{eq:fxy}) multiplying $\alpha$, we get
\begin{align*}
& \BE[P_{t+1}] \\
\le & P_t + (1+\alpha) \mathbb{E} \left [  - \frac{\gamma\eta}{2} \Vert\nabla P(\bar x_t) \Vert^2 - \left( \frac{\gamma\eta}{2} - \frac{L_{P} \gamma^2 \eta^2 }{2}  \right)  \Vert \bar u_t \Vert^2 \right]+ (1+\alpha) \times \\
&\quad \mathbb{E} \left[\frac{8\kappa^2\gamma \eta}{m} \sum_{i=1}^m \Vert \mathbf{v}_t'(i) \Vert^2 + \frac{8 \kappa^2\gamma \eta}{m} \sum_{i=1}^m \Vert \mathbf{v}_t(i)- \nabla_y f(\bar x_t, \mathbf{y}_t(i)) \Vert^2+ \frac{\gamma\eta}{m} \sum_{i=1}^m\Vert \bar u_t - \nabla_x f(\bar x_t,\mathbf{y}_t(i)) \Vert^2 \right] \\
& + \alpha \left[ - \left(\frac{\eta}{4} - \frac{\eta^2 L}{2}\right)\frac{1}{m} \sum_{i=1}^m \Vert \mathbf{v}'_{t}(i)  \Vert^2  + \left(\frac{\gamma^2 \eta^2 L}{2} + \gamma^2 \eta^3 L^2 + \frac{3 \gamma^2 \eta}{2}\right) \Vert \bar u_t \Vert^2 \right] \\
& + \frac{\alpha}{m} \sum_{i=1}^m \mathbb{E} \left[  \frac{\eta}{2} \Vert \nabla_x f(\bar x_t,\mathbf{y}_t(i)) - \bar u_t \Vert^2 + \frac{\eta}{2} \Vert\nabla_y f(\bar x_t, \mathbf{y}_t(i))- \mathbf{v}_t(i)) \Vert^2 + \frac{1}{2 \eta} \Vert \bar y_t -  \mathbf{y}_t(i) \Vert^2\right].
\end{align*}
Rearranging the above result leads to
\begin{align*}
\mathbb{E}[\Psi_{t+1}] &\le \mathbb{E} \left[ \Psi_t - \frac{(1+\alpha)\gamma \eta}{2} \Vert \nabla P(\bar x_t) \Vert^2\right] \\
&\quad - \left( (1+\alpha) \left(\frac{\gamma \eta}{2} - \frac{L_{P} \gamma^2 \eta^2}{2} \right) - \alpha \left( \frac{L\gamma^2 \eta^2 }{2} +\gamma^2 \eta^3 L  + \frac{3 \gamma^2 \eta}{2} \right) \right) \mathbb{E}[  \Vert \bar u_t \Vert^2 ] \\
&\quad - \left( \alpha \left( \frac{\eta}{4} - \frac{L \eta^2}{2} \right) -(1+\alpha) 8 \kappa^2 \gamma \eta\right) \frac{1}{m} \sum_{i=1}^m\mathbb{E}[   \Vert \mathbf{v}_t'(i) \Vert^2 ] \\
&\quad + \left((1+\alpha)\gamma + \frac{\alpha}{2}\right) \frac{\eta}{m} \sum_{i=1}^m \mathbb{E}[ \Vert \bar u_t  - \nabla_x f(\bar x_t, \mathbf{y}_t(i) ) \Vert^2] \\
&\quad + \left((1+\alpha) 8 \kappa^2 \gamma + \frac{\alpha}{2} \right) \frac{ \eta}{m}  \sum_{i=1}^m\mathbb{E}[ \Vert \mathbf{v}_t(i) - \nabla_y f(\bar x_t, \mathbf{y}_t(i) ) \Vert^2 ] \\
&\quad + \frac{\alpha}{2 \eta m} \sum_{i=1}^m \BE[ \Vert \by_t - \vy_t(i) \Vert^2].
\end{align*}
Since $\alpha \in (0,1]$, taking $\eta \le 1/(4L)$ and the definition of $\gamma$ in (\ref{choice:lambda}) mean
\begin{align*}
     \alpha \left( \frac{\eta}{4} - \frac{L \eta^2}{2} \right) -(1+\alpha) 8 \kappa^2 \gamma \eta  \ge \frac{\alpha \eta}{16}
\end{align*}
as well as 
\begin{align*}
    (1+\alpha) \left(\frac{\gamma \eta}{2} - \frac{L_{P} \gamma^2 \eta^2}{2} \right) - \alpha \left( \frac{L\gamma^2 \eta^2 }{2} +\gamma^2 \eta^3 L  + \frac{3 \gamma^2 \eta}{2} \right) \ge \frac{\gamma \eta}{4}.
\end{align*}
Also, the fact $\alpha \in (0,1]$ and our choice of $\gamma$ means
\begin{align*}
    8(1+\alpha) \kappa^2 \gamma + \frac{\alpha}{2} \le \alpha 
    \qquad \text{and}\qquad 
    (1+\alpha) 8 \kappa^2 \gamma + \frac{\alpha}{2} \le \alpha.
\end{align*}
Therefore, we obtain the optimization bound as 
\begin{align} \label{ieq:bound-P}
\small\begin{split}
\BE[\Psi_{t+1}] 
\le & \BE \left[ \Psi_t - \frac{\gamma \eta}{2} \Vert \nabla P(\bar x_t) \Vert^2 - \frac{\gamma \eta}{4} \Vert \bu_t \Vert^2 - \frac{\alpha \eta}{16m} \sum_{i=1}^m \Vert \vv_t'(i) \Vert^2  \right] \\
& + \BE \left[ \frac{\alpha \eta}{m} \sum_{i=1}^m \Vert \bar u_t - \nabla_x f(\bx_t, \vy_t(i) \Vert^2 + \frac{\alpha \eta}{m} \sum_{i=1}^m \Vert \vv_t(i) - \nabla_y f(\bx_t, \vy_t(i)) \Vert^2 + \frac{\alpha}{2 \eta m} \sum_{i=1}^m \Vert \by_t - \vy_t(i) \Vert^2 \right].
\end{split}
\end{align}
Note that it holds that
\begin{align*}
  \sum_{i=1}^m \Vert \mathbf{v}_t (i) - \nabla_y f(\bar x_t, \mathbf{y}_t(i)) \Vert^2 
\le  3m \Vert \bv_t -  \nabla_y f( \bx_t,\by_t) \Vert^2 + 3 \Vert \mathbf{v}_t  - \vone \bv_t \Vert^2 + 3L^2 \Vert \mathbf{y}_t - \mathbf{1} \bar y_t \Vert^2
\end{align*}
and
\begin{align*}
\sum_{i=1}^m \Vert \bar u_k - \nabla_x f(\bar x_t, \mathbf{y}_t(i) ) \Vert^2 
\le 2 m \Vert \bar u_t - \nabla_x f(\bar x_t, \bar y_t ) \Vert^2 + 2L^2 \Vert \mathbf{y}_t - \mathbf{1} \bar y_t \Vert^2,
\end{align*}
where we use the $L$-smoothness and Young's inequality. Then we combine Lemma \ref{lem:error-grad} to get
\begin{align} \label{plg:3}
\begin{split}
&\quad   \frac{\alpha \eta}{m} \sum_{i=1}^m \Vert \bar u_t - \nabla_x f(\bx_t, \vy_t(i) \Vert^2 + \frac{\alpha \eta}{m} \sum_{i=1}^m \Vert \vv_t(i) - \nabla_y f(\bx_t, \vy_t(i)) \Vert^2 \\
&\le\frac{6 \alpha \eta L^2}{m} \Vert \vy_t - \vone \by_t \Vert^2 + \frac{3 \alpha \eta}{m} \Vert \bar s_t - \nabla f(\bx_t, \by_t) \Vert^2 +   \frac{3 \alpha \eta}{m} \Vert \vv_t - \vone \bv_t \Vert^2 \\
&\le \frac{\alpha}{2\eta m} \Vert \vy_t - \vone \by_t \Vert^2 + 3 \alpha \eta \Vert \bar s_t - \nabla f(\bx_t, \by_t) \Vert^2 +   \frac{3 \alpha \eta}{m} \Vert \vv_t - \vone \bv_t \Vert^2,
\end{split}
\end{align}
where we use $\eta \le 1/(4L)$. Plugging  (\ref{plg:3}) into (\ref{ieq:bound-P}) and then use the inequality 
\begin{align*}
    \frac{1}{m} \sum_{i=1}^m \Vert \vv_t'(i) \Vert^2 \ge \Vert \bv_t' \Vert^2.
\end{align*}
Hence, we have
\begin{align*}
\BE[\Psi_{t+1}] &\le \BE \left[ \Psi_t - \frac{\gamma \eta}{2} \Vert \nabla P(\bar x_t) \Vert^2 - \frac{\gamma \eta}{4} \Vert \bu_t \Vert^2 - \frac{\alpha \eta}{16}  \Vert \bv_t' \Vert^2  \right] \\
&\quad + \BE \left[ 3 \alpha \eta \Vert \bar s_t - \nabla_x f(\bx_t, \by_t) \Vert^2 + \frac{3\alpha \eta}{m} \Vert \vv_t - \vone \bv_t \Vert^2 + \frac{\alpha}{\eta m}  \Vert \vy_t - \by_t \Vert^2 \right].
\end{align*}
Combining with Lemma \ref{lem:error-grad} and using
$\Vert \vy_t - \vone \by_t \Vert \le \Vert \vz_t - \vone \bz_t \Vert$ and $\eta \le 1/(4L)$, we obtain
\begin{align*}
\BE[\Psi_{t+1}] &\le \BE \left[ \Psi_t - \frac{\gamma \eta}{2} \Vert \nabla P(\bar x_t) \Vert^2 - \frac{\gamma \eta}{4} \Vert \bu_t \Vert^2 - \frac{\alpha \eta}{16} \Vert \bv_t' \Vert^2  \right] \\
&\quad + \BE \left[ \frac{3\alpha \eta}{m} \Vert \vv_t - \vone \bv_t \Vert^2 + 6 \alpha \eta \left \Vert \frac{1}{m} \sum_{i=1}^m  (\vg_t(i) - \nabla f_i(\vz_t(i))) \right \Vert^2 +  \frac{2\alpha}{\eta m}  \Vert \vz_t - \bz_t \Vert^2 \right].
\end{align*}
Note that
\begin{align*}
    \bu_t = \frac{\bx_t - \bx_{t+1}}{\gamma \eta}, \quad \bv_t' = \frac{\by_{t+1} - \by_t}{\eta} \quad\text{and} \quad \frac{1}{8 \gamma \eta} \ge \frac{\alpha}{16 \eta},
\end{align*}
then we have
\begin{align*} 
\BE[\Psi_{t+1}] &\le \BE \left[ \Psi_t - \frac{\gamma \eta}{2} \Vert \nabla P(\bar x_t) \Vert^2 - \frac{1}{8 \gamma \eta} \Vert \bx_{t+1} - \bx_t \Vert^2 - \frac{\alpha}{16 \eta} \Vert \bz_{t+1} - \bz_t \Vert^2  \right] \\
&\quad + \BE \left[ \frac{3\alpha \eta}{m} \Vert \vv_t - \vone \bv_t \Vert^2 + 6 \alpha \eta  \left \Vert \frac{1}{m} \sum_{i=1}^m  (\vg_t(i) - \nabla f_i(\vz_t(i))) \right \Vert^2  +  \frac{2\alpha}{\eta m}  \Vert \vz_t - \vone \bz_t \Vert^2 \right].
\end{align*}
Recalling the definition of $C_t$ and $U_t$, we obtain the result of Lemma \ref{lem:Lyp-P}.
\end{proof}

\section{The Proof of Lemma \ref{lem:Lyp-C}}

\begin{proof}
The relation of (\ref{update:avg}) means
\begin{align*}
& \Norm{\vx_{t+1} - \vone\bx_{t+1}} \\
=  & \rho \Norm{\vx_t - \gamma \eta\vu_t  - \vone(\bx_t - \eta\bu_t)} \\
\leq  & \rho\left(\Norm{\vx_t - \vone\bx_t} + \gamma \eta\Norm{\vu_{t} - \vone\bu_{t}}\right),
\end{align*}
where the last step is due to triangle inequality. Similarly, we define the notation~$\bar \Pi(\cdot) = \frac{1}{m} \vone \vone^\top (\cdot)$ for convenience. Then for variable $y$, we can verify that
\begin{align*}
& \Norm{\vy_{t+1} - \vone\by_{t+1}} \\
\leq  & \rho \Norm{\Pi(\vy_t + \eta\vv_t)  - \frac{1}{m}\vone\vone^\top\Pi\left(\vy_t + \eta\vv_t\right)} \\
\leq & \rho  \Norm{\Pi(\vy_t + \eta\vv_t)  - \Pi( \vone \by_t + \eta \vone \bv_t)} + \rho \Norm{\Pi( \vone \by_t + \eta \vone \bv_t) - \vone \bar \Pi\left(\vy_t + \eta\vv_t\right)} \\
\leq  & 2\rho \Norm{\vy_t + \eta\vv_t  - \vone(\by_t + \eta\bv_t)} \\
\leq  & 2\rho\left(\Norm{\vy_t - \vone\by_t} + \eta\Norm{\vv_{t} - \vone\bv_{t}}\right),
\end{align*}
where in the third inequality we use  the non-expansiveness of projection and Lemma 11 in ~\cite{ye2020decentralized}, i.e.
\begin{align*}
    \Norm{  \vone \bar \Pi(\vx) - \Pi(\vone \bx  )} \le \Vert \vx - \vone \bx \Vert.
\end{align*}
Consequently, we use Young's inequality together with $\gamma \in (0,1]$ to obtain
\begin{align} \label{recursion:z}
\begin{split}
&\quad \Norm{\vz_{t+1} - \vone\bz_{t+1}}^2 \\ 
&= \Norm{\vx_{t+1} - \vone \bx_{t+1}}^2 + \Norm{\vy_{t+1} - \vone \by_{t+1}}^2 \\
&\le 8 \rho^2 \Norm{\vy_t - \vone \by_t}^2 + 8 \rho^2 \eta^2 \Norm{\vv_t - \vone \bv_t}^2 +2 \rho^2 \Norm{\vx_t - \vone \bx_t}^2 + 2 \rho^2 \eta^2 \Norm{\vy_t - \vone \by_t}^2 \\
&\le 8\rho^2\Norm{\vz_t - \vone\bz_t}^2 + 8\rho^2\eta^2\Norm{\vs_{t} - \vone\bs_{t}}^2. 
\end{split}
\end{align}
Furthermore, if $24 \rho^2 \le 1$, we have
\begin{align} \label{ieq:zt12}
\begin{split}
&\quad    \Vert \vz_{t+1} - \vz_t \Vert^2 \\
&\le 3 \Vert \mathbf{z}_{t+1} - \mathbf{1} \bar z_{t+1} \Vert^2 + 3 \Vert \mathbf{z}_{t} - \mathbf{1} \bar z_{t} \Vert^2 + 3 \Vert \mathbf{1} \bar {z}_{t} - \mathbf{1} \bar z_{t+1} \Vert^2 \\
&\le (24 \rho^2 +3) \Vert \mathbf{z}_t - \mathbf{1} \bar z_k \Vert^2 + 24 \rho^2 \eta^2 \Vert \mathbf{s}_t - \mathbf{1} \bar s_t \Vert^2 + 3 \Vert \mathbf{1} \bar {z}_{t} - \mathbf{1} \bar z_{t+1} \Vert^2 \\
&\le 4 \Vert \mathbf{z}_t - \mathbf{1} \bar z_t \Vert^2 +  \eta^2 \Vert \mathbf{s}_t - \mathbf{1} \bar s_t \Vert^2 + 3m \Vert  \bar {z}_{t} - \bar z_{t+1} \Vert^2.
\end{split}
\end{align}
We let $\rho_t = \rho'$ for $\zeta_t=1$ and $\rho_t=\rho$ otherwise. 
The update of $\vg_{t+1}(i)$ means
\begin{align*} 
\begin{split}    
  & \BE\big[ \rho_t^2 \Norm{\vg_{t+1}(i)-\vg_t(i)}^2\big] \\
= & \rho'^2 p \BE\Norm{\dfrac{1}{b'}\sum_{\xi_{i,j}\in \fS_t'(i)}\!\!\!\!\nabla F_i(\vz_{t+1}(i);\xi_{i,j})-\vg_t(i)}^2 + \frac{\rho^2 (1-p)}{bq}\BE\Norm{\nabla F_i(\vz_{t+1}(i);\xi_{i,1}) - \nabla F_i(\vz_t(i);\xi_{i,1})}^2 \\
\leq & 3 \rho'^2 p\BE\Norm{\dfrac{1}{b'}\sum_{\xi_{i,j}\in \fS_t'(i)} \nabla F_i(\vz_{t+1}(i);\xi_{i,j})-\nabla f_i(\vz_{t+1}(i))}^2 + 3 \rho'^2 p\BE\Norm{\nabla f_i(\vz_{t+1}(i))-\nabla f_i(\vz_t(i))}^2 \\
& + 3 \rho'^2 p\BE\Norm{\nabla f_i(\vz_t(i))-\vg_t(i)}^2 + \frac{\rho^2(1-p)L^2}{bq  }\BE\Norm{\vz_{t+1}(i) - \vz_t(i)}^2 \\
\leq & \frac{3 \rho'^2 p\sigma^2}{b'} \BI[b' < n] + 3 \rho'^2 p L^2\BE\Norm{\vz_{t+1}(i)-\vz_t(i)}^2 
 \\ & + 3 \rho'^2 p\BE\Norm{\nabla f_i(\vz_t(i))-\vg_t(i)}^2 + \frac{\rho^2 (1-p)L^2}{bq}\BE\Norm{\vz_{t+1}(i) - \vz_t(i)}^2 \\
\le & \frac{3 \rho'^2 p\sigma^2}{b'} \BI[b' < n] + 3 \rho'^2 p\BE\Norm{\nabla f_i(\vz_t(i))-\vg_t(i)}^2 + \left(\frac{1-p}{bq} + 3p\right)\rho^2L^2\BE\Norm{\vz_{t+1}(i)-\vz_t(i)}^2,
\end{split}
\end{align*}
where the first inequality is based on update rules and Assumption~\ref{asm:SFO}; the second inequality is based on triangle inequality and the last inequality is due to Assumption~\ref{asm:smooth}. 
Summing over above result over $i=1,\dots,m$, obtain
\begin{align*}
   \BE\big[\rho_t^2 \Norm{\vg_{t+1}-\vg_t}^2\big] 
\leq \frac{3 \rho'^2 m  p\sigma^2}{b'} \BI[b' < n] + 3 \rho'^2 p\BE\Norm{\nabla \vf(\vz_t)-\vg_t}^2 + \left(\frac{1-p}{bq} + 3  p\right)\rho^2 L^2\BE\Norm{\vz_{t+1}-\vz_t}^2.
\end{align*}
Let $c = \max \left\{1/ (bq), 1 \right\}$ and plug it into  (\ref{ieq:zt12}), then
\begin{align}\label{inq:g_t-g_t+1}
\begin{split}
& \mathbb{E} [ \rho_t^2\Vert \mathbf{g}_{t+1} -  \mathbf{g}_t \Vert^2] \\
\le & \frac{3 \rho'^2 m  \sigma^2}{b'} \mathbb{I}[b' < n ] + 3 \rho'^2  \mathbb{E}[\Vert \nabla \mathbf{f}(\mathbf{z}_t) - \mathbf{g}_t \Vert^2] + c \rho^2 L^2 \mathbb{E}[ \Vert \mathbf{z}_{t+1} - \mathbf{z}_t \Vert^2]\\ 
\le & \frac{3 \rho'^2 m  \sigma^2}{b'} \mathbb{I}[b' < n ] + 3 \rho'^2 \mathbb{E}[\Vert \nabla \mathbf{f}(\mathbf{z}_t) - \mathbf{g}_t \Vert^2] \\
& + 16c \rho^2 L^2  \mathbb{E}[ \Vert \mathbf{z}_{t} - \mathbf{1} \bar z_{t} \Vert^2] + 4c \rho^2 L^2 \eta^2 \mathbb{E}[\Vert \mathbf{s}_t - \mathbf{1} \bar s_t \Vert^2 ] + 12c \rho^2 m L^2 \mathbb{E}[\Vert \bar z_{t+1} - \bar z_t \Vert^2].
\end{split}
\end{align}
Furthermore,  we have
\begin{align}\label{inq:s_t+1}
\begin{split}
   & \Norm{\vs_{t+1} - \vone\bs_{t+1}} \\
\leq & \rho_t\Norm{\vs_t + \vg_{t+1} - \vg_t - \frac{1}{m}\vone\vone^\top(\vs_t + \vg_{t+1} - \vg_t)} \\
\leq & \rho_t\Norm{\vs_t - \vone\bs_t} + \rho_t\Norm{\vg_{t+1} - \vg_t - \frac{1}{m}\vone\vone^\top(\vg_{t+1} - \vg_t)} \\
\leq & \rho_t\Norm{\vs_t - \vone\bs_t} + \rho_t\Norm{\vg_{t+1} - \vg_t},
\end{split}
\end{align}
where the second inequality is based on triangle inequality and the last step uses Lemma~\ref{lem:avg-norm}. Combining the results of (\ref{inq:g_t-g_t+1}) and~(\ref{inq:s_t+1}) and using $\eta \le 1/(4L)$, we have
\begin{align}\label{recursive:s}
\begin{split}
& \quad \eta^2 \BE\Norm{\vs_{t+1} - \vone\bs_{t+1}}^2 \\
&\leq  2\rho^2 \eta^2 \Norm{\vs_t - \vone\bs_t}^2 + 2\eta^2 \BE [ \rho_t^2\Norm{\vg_{t+1} - \vg_t}^2] \\
&\le 4c \rho^2 \eta^2 \mathbb{E}[ \Vert \mathbf{s}_t - \mathbf{1} \bar s_t \Vert^2 ] + 4 c\rho^2  \mathbb{E}[\Vert \mathbf{z}_t - \mathbf{1} \bar z_t \Vert^2] \\
&\quad + 6 \rho'^2 \eta^2 \mathbb{E}[ \Vert \nabla \mathbf{f}(\mathbf{z}_t) - \mathbf{g}_t \Vert^2 ] +  2  c\rho^2 m \mathbb{E}[ \Vert \bar z_{t+1} - \bar z_t \Vert^2] + \frac{6 \rho'^2m \eta^2 \sigma^2}{b'} \mathbb{I}[b' < n ].
\end{split}
\end{align}
Combining (\ref{recursive:s}) and (\ref{recursion:z}), we obtain
\begin{align*}
&\quad \mathbb{E} \big[ \Vert \mathbf{z}_{t+1} - \mathbf{1} \bar z_{t+1} \Vert^2 + \eta^2 \Vert \mathbf{s}_{t+1} - \mathbf{1} \bar s_{t+1} \Vert^2\big] \\
&\le 12 c\rho^2 \mathbb{E}[\Vert \mathbf{z}_t - \mathbf{1} \bar z_t \Vert^2 +  \eta^2 \Vert \mathbf{s}_t - \mathbf{1} \bar s_t \Vert^2 ] 
 + 6 \rho'^2 \eta^2 \mathbb{E}[\Vert \nabla \mathbf{f}(\mathbf{z}_t ) - \mathbf{g}_t \Vert^2]   +2c\rho^2 m \mathbb{E}[\Vert \bar z_{t+1} - \bar z_t \Vert^2  ] +\frac{ 6 \rho'^2 m \eta^2  \sigma^2}{b'}  \mathbb{I}[b'\!<\!n ],
\end{align*}
which finishes our proof. 
\end{proof}

\section{The Proof of Lemma \ref{lem:Lyp-V}}

\begin{proof}
The update of $\vg_{t+1}(i)$ means
\begin{align*}
\begin{split}  
& \BE\Norm{\vg_{t+1}(i)-\nabla f_i(\vz_{t+1}(i))}^2 \\
= & p\BE\Norm{\dfrac{1}{b'}\sum_{\xi_{i,j}\in \fS'_t(i)}\nabla F_i(\vz_{t+1}(i);\xi_{i,j})-\nabla f_i(\vz_{t+1}(i))}^2 \\
& + (1-p)\BE\Norm{\vg_t(i) + \dfrac{\omega_t(i)}{bq}\sum_{\xi_{i,j}\in \fS_t(i)}\big(\nabla F_i(\vz_{t+1}(i);\xi_{i,j}) - \nabla F_i(\vz_t(i);\xi_{i,j})\big)-\nabla f_i(\vz_{t+1}(i))}^2 \\
\leq & \frac{p\sigma^2}{b'} \BI[b' < n ] + (1-p) \BE \Big \Vert \vg_t(i) - \nabla f_i(\vz_t(i))  \\
&\quad + \dfrac{\omega_t(i)}{bq}\sum_{\xi_{i,j}\in \fS_t(i)}\big(\nabla F_i(\vz_{t+1}(i);\xi_{i,j}) - \nabla F_i(\vz_t(i);\xi_{i,j})-\nabla f_i(\vz_{t+1}(i))+\nabla f_i(\vz_t(i))\big) \Big \Vert^2 \\
= & \frac{p\sigma^2}{b'} \BI[b'<n] + (1-p)\BE\Norm{\vg_t(i) - \nabla f_i(\vz_t(i))}^2 \\
& + (1-p)\BE\Norm{\dfrac{\omega_t(i)}{bq}\sum_{\xi_{i,j}\in \fS_t(i)}\big(\nabla F_i(\vz_{t+1}(i);\xi_{i,j}) - \nabla F_i(\vz_t(i);\xi_{i,j})-\nabla f_i(\vz_{t+1}(i))+\nabla f_i(\vz_t(i))\big)}^2 \\
\leq & \frac{p\sigma^2}{b'} \BI[b'<n]+ (1-p)\BE\Norm{\vg_t(i) - \nabla f_i(\vz_t(i))}^2 + \dfrac{1-p}{bq}\BE\Norm{\nabla F_i(\vz_{t+1}(i);\xi_{i,j}) - \nabla F_i(\vz_t(i);\xi_{i,j})}^2 \\
\leq & \frac{p\sigma^2}{b'} \BI[b'<n] + (1-p)\BE\Norm{\vg_t(i) - \nabla f_i(\vz_t(i))}^2 + \dfrac{(1-p)L^2}{bq}\BE\Norm{\vz_{t+1}(i)-\vz_t(i)}^2,
\end{split}
\end{align*}
where the first inequality is based on  Assumption~\ref{asm:SFO}; the second inequality use the property of variance; the last inequality is based on Assumption~\ref{asm:smooth}; the second equality uses the property of martingale according to Proposition~1 in \citep{fang2018spider}.
Taking the average over $i=1,\dots,m$ for above result and using (\ref{ieq:zt12}), we obtain
\begin{align}\label{ieq:error-grad2}
\begin{split}
 &\BE\Norm{\vg_{t+1}-\nabla \vf(\vz_{t+1})}^2 \\
\leq & \frac{m p\sigma^2}{b'} \BI[b' <n ]+ (1-p)\BE\Norm{\vg_t - \nabla \vf(\vz_t)}^2 + \dfrac{(1-p)L^2}{ b q}\BE\Norm{\vz_{t+1}-\vz_t}^2 \\
\leq & \frac{m p\sigma^2}{b'} \BI[b' <n ]+ (1-p)\BE\Norm{\vg_t - \nabla \vf(\vz_t)}^2 \\
& +  \frac{(1-p)L^2}{bq} \mathbb{E}[4 \Vert \mathbf{z}_t - \mathbf{1} \bar z_t \Vert^2 +  \eta^2 \Vert \mathbf{s}_t - \mathbf{1} \bar s_t \Vert^2 + 3 m\Vert  \bar {z}_{t+1} -  \bar z_{t} \Vert^2],
\end{split}
\end{align}
which is the variance bound as claimed by the definition of $V_t$.
\end{proof}

\section{The Proof of Lemma \ref{lem:Lyp-U}}

We omit the detailed proof since it is almost identical to the proof of Lemma \ref{lem:Lyp-V}. We leave this as an exercise to the reader.
Compared with Lemma \ref{lem:Lyp-V}, the quantities in Lemma \ref{lem:Lyp-U} can be scaled with an additional factor of $1/m$ by using 
the fact
\begin{align*}
\BE \Big\Vert \sum_{i=1}^m a_i  \Big\Vert^2 = \sum_{i=1}^m \BE \Vert a_i \Vert^2,  
\end{align*}
where each $a_1,\dots,a_m$ are independent with zero mean. 

\section{The Proof of Theorem \ref{thm:main}}

\begin{proof}
Combing Lemma \ref{lem:Lyp-P}, \ref{lem:Lyp-C}, \ref{lem:Lyp-V} and \ref{lem:Lyp-U} together, we obtain
\begin{align*}
& \BE[\Phi_{t+1}] \\
\le & \BE \left[ \Phi_t   - \frac{\gamma \eta}{2} \Vert \nabla P(\bar x_t) \Vert^2 - \frac{1}{8 \gamma \eta} \Vert \bx_{t+1} - \bx_t \Vert^2 - \frac{\alpha}{16 \eta} \Vert \bz_{t+1} - \bz_t \Vert^2 + 6 \alpha \eta U_t + \frac{3 \alpha }{m \eta} C_t \right] \\
& + \frac{\eta }{mp} \BE \left[ -p V_t + \frac{4(1-p) L^2}{ m b q} C_t + \frac{3(1-p)L^2 }{b q} \Vert \bz_{t+1} - \bz_t \Vert^2 + \frac{p \sigma^2}{b'} \BI[b' < n ]  \right] \\
& + \frac{\eta}{p} \BE \left[ - p U_t + \frac{4 (1-p) L^2}{m^2 b q} C_t + \frac{3 (1-p) L^2}{m b q} \Vert \bz_{t+1} - \bz_t \Vert^2 + \frac{p \sigma^2}{m b'} \BI[b' < n ] \right]  \\
& + \frac{1}{\eta m} \BE\left[-(1-12c \rho^2)  C_t + 6  m \rho'^2 \eta^2 V_t   +2c\rho^2 m \mathbb{E}[\Vert \bar z_{t+1} - \bar z_t \Vert^2  ] +\frac{ 6 \rho'^2 m \eta^2  \sigma^2}{b'}  \mathbb{I}[b' < n ] \right] \\
=&  \Phi_t + \BE\left[   - \frac{\gamma \eta}{2} \Vert \nabla P(\bar x_t) \Vert^2 - \frac{1}{8 \gamma \eta} \Vert \bx_{t+1} - \bx_t \Vert^2 - \left(  \frac{\alpha}{16 \eta} - \frac{2c \rho^2}{\eta} - \frac{6 \eta (1-p) L^2}{m b p q}  \right) \Vert \bz_{t+1} - \bz_t \Vert^2 \right] \\
& - (1 - 6 \alpha) \eta U_t -  \frac{(1- 6 m \rho'^2)\eta}{m} V_t - \left( \frac{1 - 12 c\rho^2 - 3 \alpha }{\eta m} - \frac{8(1-p) L^2}{m^2 b p q} \right) C_t \\
&\quad  + \left(   6 \rho'^2 \frac{2 }{m} \right) \frac{\eta \sigma^2}{b'} \BI[b' <n].
\end{align*}
Plugging in our setting of parameters in (\ref{para:val}),  it can be seen that
\begin{align*}
\BE[\Phi_{t+1}] \le \Phi_t -  \frac{\gamma \eta}{2} \BE[ \Vert \nabla P(\bar x_t) \Vert^2]   - \frac{4 \alpha}{\eta m} C_t + \frac{3 \eta \sigma^2}{m b'} \BI[b' <n]. 
\end{align*}
Telescoping for $t = 0,1\cdots,T-1$, we obtain
\begin{align} \label{ieq:res-bound}
\frac{1}{T} \sum_{t=0}^{T-1} \BE[\Vert \nabla P(\bx_t) \Vert^2] \le \frac{2 }{\gamma \eta T} \Phi_0 - \frac{8 \alpha}{\gamma \eta^2 m T} \sum_{t=0}^{T-1} C_t + \frac{6 \sigma^2}{\gamma m b'} \BI[b' <n].
\end{align}
Note that achieving $\bx_t$ is not simple, so the output $x_{\rm out}$ is sampled from $\{\vx_t(i)\}$ where $t=0,\dots,T-1$ and $i=1,\dots,m$. We also has the following bound:
\begin{align*}
\BE\Norm{\nabla P(x_{\rm out})}^2 = & \frac{1}{mT}\sum_{i=1}^{m}\sum_{t=0}^{T-1}\Norm{\nabla P(\vx_t(i))}^2 \\
\leq & \frac{2}{mT}\sum_{i=1}^{m}\sum_{t=0}^{T-1} \left(\Norm{\nabla P(\bar x_t)}^2 + \Norm{\nabla P(\vx_t(i))-\nabla P(\bx_t)}^2\right) \\
\leq & \frac{2}{mT}\sum_{i=1}^{m}\sum_{t=0}^{T-1} \left(\Norm{\nabla P(\bar x_t)}^2  + L^2\Norm{\vx_t(i)-\bx_t}^2\right) \\
= & \frac{2}{T}\sum_{t=0}^{T-1} \Norm{\nabla P(\bar x_t)}^2 + \frac{2L^2}{mT}\sum_{t=0}^{T-1}\Norm{\vx_t-\vone\bx_t}^2 \\
\leq & \frac{2}{T}\sum_{t=0}^{T-1} \Norm{\nabla P(\bar x_t)}^2 + \frac{2L^2}{mT}\sum_{t=0}^{T-1} C_t,
\end{align*}
where the first step use Young's inequality; the second inequality is due to Assumption~\ref{asm:smooth}. Now we plug in (\ref{ieq:res-bound}) to get the following bound as
\begin{align*}
\BE\Vert \nabla P( x_{\rm out}) \Vert^2 &\le \frac{4 }{\gamma \eta T} \Phi_0 - \left( \frac{16 \alpha}{\gamma \eta^2 } - 2L^2 \right) \frac{1}{mT} \sum_{t=0}^{T-1} C_t + \frac{12 \sigma^2}{\gamma m b'} \BI[b'<n].
\end{align*}
Recall the choice of $\gamma$ in (\ref{choice:lambda}); $\eta \le 1/(4L)$ and $p T \ge 2$. Hence, we have
\begin{align*}
\BE\Vert \nabla P(x_{\rm out}) \Vert^2 &\le \frac{4 }{\gamma \eta T} \Phi_0  + \frac{12 \sigma^2}{\gamma m b'} \BI[b'<n] \\
&= \frac{4 }{\gamma \eta T} \left( \Psi_0 + \frac{\eta}{m p } V_0 + \frac{\eta}{p} U_0 + \frac{\eta}{m} C_0 \right) +\frac{12 \sigma^2}{\gamma m b'} \BI[b'<n] \\
&= \frac{4 }{\gamma \eta T} \left( \Psi_0 + \frac{2\eta \sigma^2}{ mb' p} \BI[b' < n ] + \frac{\eta}{m} \Vert \vs_0 - \vone \bs_0 \Vert^2  \right) + \frac{12 \sigma^2}{\gamma m b'} \BI[b'<n]  \\
&\le \frac{4 }{\gamma \eta T} \left( \Psi_0 + \frac{2\eta \sigma^2}{ mb' p} \BI[b' < n ] + \frac{\eta \rho_0^2}{m} \Vert \vg_0 - \vone \bg_0 \Vert^2  \right) + \frac{12 \sigma^2}{\gamma m b'} \BI[b'<n] \\
&\le \frac{8 }{\gamma \eta T} \Psi_0 + \frac{16 \sigma^2}{\gamma m b' } \BI[b'<n] + \frac{4 \rho_0^2 \Vert \vg_0 - \vone \bg_0 \Vert^2 }{\gamma m}.
\end{align*}
Therefore the parameters in (\ref{para:val}) and (\ref{choice:b}) guarantee that  $\BE \Vert \nabla P(x_{\rm out}) \Vert^2 \le \epsilon^2$. The Jensen's inequality further implies that the output is a nearly stationary point satisfying $\BE \Vert \nabla P(x_{\rm out}) \Vert \le \epsilon$.

Recall our choice of $\gamma $ in (\ref{choice:lambda}), we know that $1/\gamma = \Theta(\kappa^2)$. Then the total SFO complexity for all agents in expectation is
\begin{align*}
mb' + mT(b'p + b q (1-p))  = mb' + \frac{2mT b' b q}{b' + b q } \le mb' + 2mT b q. 
\end{align*}
Plug in the choice of $b,b',q$ yields the SFO complexity as claimed. Next, recalling the definition of $\rho,\rho', \rho_0$ in (\ref{dfn:rho}), we know the total number of communication rounds is 
\begin{align*}K_0 + T(p K' + (1-p)K)  = 
\begin{cases}
    \fO\left({\kappa^2 L\eps^{-2}}/\sqrt{\delta} \right), & b' \ge m; \\[0.2cm]
    \fO\left({\kappa^2 L\eps^{-2} \log (m/b')}/\sqrt{\delta}\right) , & b' < m.
\end{cases}
\end{align*}
\end{proof}

\section{Future Directions and Subsequent Works}

A future direction is to establish decentralized stochastic algorithms with better dependency on the condition number $\kappa$ by devising multiple-looped algorithms such as the single-machine setting \citep{zhang2022sapd+}.
It is also interesting to consider decentralized minimax optimization with the nonconvex-non-strongly-concave objectives ~\citep{lin2020gradient,lin2020near,zhang2020single,xu2023unified}, 
or some classes of nonconvex-nonconcave objectives~\citep{zheng2022doubly,li2022nonsmooth,jin2020local,diakonikolas2021efficient,yang2020global,chen2022faster,guo2020fast}.

After we posted our work on arXiv, we also noticed that some subsequent works studying similar problem setups~\citep{xu2023decentralized,huang2023near,mancino2023variance}. Some of the results in these works can apply to more general setups than our problem, but the convergence rates in these works do not surpass the results of our paper. (a) Although \citet{xu2023decentralized} considers more general regularizers (we only consider the case that regularizer in $y$ is the indicator function of set $\fY$), \citet{xu2023decentralized} only studies the offline case and does not study stochastic optimization. 
Following our notations, the method proposed by \citet{xu2023decentralized} requires the computation complexity of $\mathcal{O}(N \kappa^{2.5} \epsilon^{-2})$, which is worse than our result of $\mathcal{O}(N+\sqrt{N} \kappa^2 \epsilon^{-2})$. (b) Compared with \citet{xu2023decentralized}, \citet{mancino2023variance} additionally considers the stochastic optimization, but only for the offline case. It requires the computation complexity of $\mathcal{O}(N + \sqrt{m N} \kappa^2 \epsilon^{-2})$, which is still worse than our $\mathcal{O}(N + \sqrt{N} \kappa^2 \epsilon^{-2})$.
(c) \citet{huang2023near} considers the unconstrained online case under Polyak--Łojasiewicz (PL) condition. 
In fact, all of the analyses in our paper still hold if we relax the strong concavity assumption to the PL condition in the unconstrained case.
Furthermore, although \citet{huang2023near}'s computation complexity $\mathcal{O}(\kappa^3 \epsilon^{-3})$ matches our result, their communication complexity $\mathcal{O}(\kappa^3 \epsilon^{-3})$ is worse than our $\tilde{\mathcal{O}}(\kappa^2 \epsilon^{-2})$.

\end{document}